\algrenewcommand\algorithmicrequire{\textbf{Input:}}
\algrenewcommand\algorithmicensure{\textbf{Output:}}
\definecolor{ben}{rgb}{0.9,0.,0.5}
\title{Alignist: CAD-Informed Orientation Distribution Estimation by Fusing Shape and Correspondences} 
\titlerunning{Alignist:CAD-Informed Orientation Distribution Estimation}
\author{Shishir Reddy Vutukur\inst{1}\orcidlink{0000-0002-4406-8491} \and
Rasmus Laurvig Haugaard\inst{2}\orcidlink{0000-0002-5241-5529} 
\and  Junwen Huang\inst{1} \orcidlink{0000-0002-0504-2395} \and \newline
% Heike Brock\inst{2} \orcidlink{0000-0002-9530-4714} \and 
% Andreas Hutter\inst{2}\orcidlink{0000-0002-5682-2009} \and \newline
Benjamin Busam\inst{1,3}\orcidlink{0000-0002-0620-5774} \and
Tolga Birdal\inst{4} \orcidlink{0000-0001-7915-7964}
% Slobodan Ilic\inst{1,2}\orcidlink{0000-0002-3413-1936}
}
\authorrunning{Vutukur et al.}
\institute{$^1$Technical University of Munich 
% \quad $^2$Siemens Technology 
 \quad $^2$University of Southern Denmark \newline $^3$Munich Center for Machine Learning (MCML) \quad $^4$Imperial College London }
\renewcommand{\vec}[1]{\mathrm{vec}{(#1)}}
\newcommand{\mat}[1]{\mathbf{#1}}
\newcommand{\x}{\mathbf{x}}
\newcommand{\R}{\mathbb{R}}
\newcommand{\rb}{\mathbf{r}}
\newcommand{\given}[1][]{\:#1\vert\:}
\newcommand{\E}{\mathbb{E}}
\newcommand{\C}{\mathbf{C}}
\newcommand{\Id}{\mathbf{I}}
\newcommand{\Rot}{\mathbf{R}}
\newcommand{\X}{\mathbf{X}}
\newcommand{\Zbb}{\mathbb{Z}}
\newcommand{\tb}{\mathbf{t}}
\newcommand{\ab}{\mathbf{a}}
\newcommand{\J}{\mathbf{J}}
\newcommand{\pt}{\mathbf{p}}
\newcommand{\Img}[0]{\mat{I}} % also image
\newcommand{\mur}{\bm{\mu}}
\newcommand{\muc}{\bm{\nu}}
\DeclarePairedDelimiterX{\infdivx}[2]{(}{)}{%
  #1\;\delimsize\|\;#2%
}
\newcommand{\Mesh}{\mathcal{M}}
\newcommand{\Tri}{\mathbf{T}}
\newcommand{\params}{\bm{\theta}}
\newcommand{\paramstwo}{\bm{\phi}}
\newcommand*\diff{\mathop{}\!\mathrm{d}}
\crefname{equation}{eq.}{eq.}
\Crefname{equation}{Eq.}{Eq.}
\crefname{theorem}{thm.}{thms.}
\Crefname{Theorem}{Thm.}{Thms.}
\crefname{conjecture}{conj.}{conjs.}
\Crefname{Conjecture}{Conj.}{Conjs.}
\crefname{proposition}{prop.}{props.}
\Crefname{proposition}{Prop.}{Props.}
\crefname{definition}{dfn.}{dfn.}
\Crefname{definition}{Dfn.}{Dfn.}
\crefname{remark}{remark}{remark}
\Crefname{Remark}{Remark}{Remark}
\Crefname{algorithm}{Alg.}{Alg.}
\newtheorem{prop}{Proposition}
\crefname{section}{Sec.}{Secs.}
\Crefname{section}{Sec.}{Secs.}
\crefname{equation}{Eq.}{Eqs.}
\Crefname{equation}{Eq.}{Eqs.}
\crefname{figure}{Fig.}{Figs.}
\Crefname{figure}{Fig.}{Figs.}
\crefname{table}{Tab.}{Tabs.}
\Crefname{table}{Tab.}{Tabs.}
\crefname{thm}{Thm.}{Thms.}
\Crefname{thm}{Thm.}{Thms.}
\crefname{conj}{Conj.}{Conjs.}
\Crefname{conj}{Conj.}{Conjs.}
\crefname{dfn}{Dfn.}{Dfns.}
\crefname{dfn}{Dfn.}{Dfns.}
\crefname{remark}{remark}{remarks}
\Crefname{Remark}{Remark}{Remarks}
\crefname{prop}{Prop.}{Prop.}
\Crefname{prop}{Prop.}{Prop.}
\Crefname{algorithm}{Alg.}{Alg.}
\crefname{appendix}{App.}{apps.}
\Crefname{appendix}{App.}{Apps.}
\crefname{appsec}{appendix}{appendices}
\Crefname{appsec}{Appendix}{Appendices}
\DeclareMathOperator*{\argmin}{arg\min}
\DeclareMathOperator*{\argmax}{arg\max}
\renewcommand{\paragraph}[1]{{\vspace{0.6mm}\noindent \bf #1}.}
\begin{document}

\maketitle
\begin{abstract}
Object pose distribution estimation is crucial in robotics for better path planning and handling of symmetric objects. Recent distribution estimation approaches employ contrastive learning-based approaches by maximizing the likelihood of a single pose estimate in the absence of a CAD model. We propose a pose distribution estimation method leveraging symmetry respecting correspondence distributions and shape information obtained using a CAD model. Contrastive learning-based approaches require an exhaustive amount of training images from different viewpoints to learn the distribution properly, which is not possible in realistic scenarios. Instead, we propose a pipeline that can leverage correspondence distributions and shape information from the CAD model, which are later used to learn pose distributions. Besides, having access to pose distribution based on correspondences before learning pose distributions conditioned on images, can help formulate the loss between distributions. The prior knowledge of distribution also helps the network to focus on getting sharper modes instead. With the CAD prior, our approach converges much faster and learns distribution better by focusing on learning sharper distribution near all the valid modes, unlike contrastive approaches, which focus on a single mode at a time. We achieve benchmark results on SYMSOL-I and T-Less datasets.

\keywords{Object Pose Estimation \and Pose Distribution \and 6D Pose Estimation \and Uncertainty \and Ambiguity}
\end{abstract}    
\section{Introduction}
Real world is quintessentially VUCA, \ie, volatile, uncertain, complex and ambiguous~\cite{johansen2013navigating}, significantly complicating the task of any methodology aimed at perceiving and interpreting our natural environments.
In fact, this inherent nature renders the quest for a universally applicable, unique solution to computer vision, elusive. Still, modern AR/VR systems and robotics applications require to navigate in our VUCA world with a high degree of accuracy, robustness, safety and reliability~\cite{placed2023survey}. In such cases, it becomes more pragmatic to reason about uncertainty and ambiguities rather than seeking a one-size-fits-all solution.

In the realm of computer vision, uncertainty often stems from incomplete, insufficient, or noisy data, while ambiguity arises when visual information can be interpreted in multiple ways. 
For example, a shadow might be mistaken for an object, or a partially occluded object might be hard to identify. Both of these notions can be captured by a \emph{multi-modal probability distribution}, whose modes correspond to ambiguous yet probable solutions, and the variation captures a notion of uncertainty. 
This motivated a plethora of pose estimation approaches~\cite{deng2022deep,murphy2021implicit,haugaard2023spyropose} to infer densities over the pose space rather than predicting point estimates as done classically~\cite{hinterstoisser2013model,xiang2017posecnn}.

\begin{figure}[t!]
        \centering
\includegraphics[width=\textwidth]{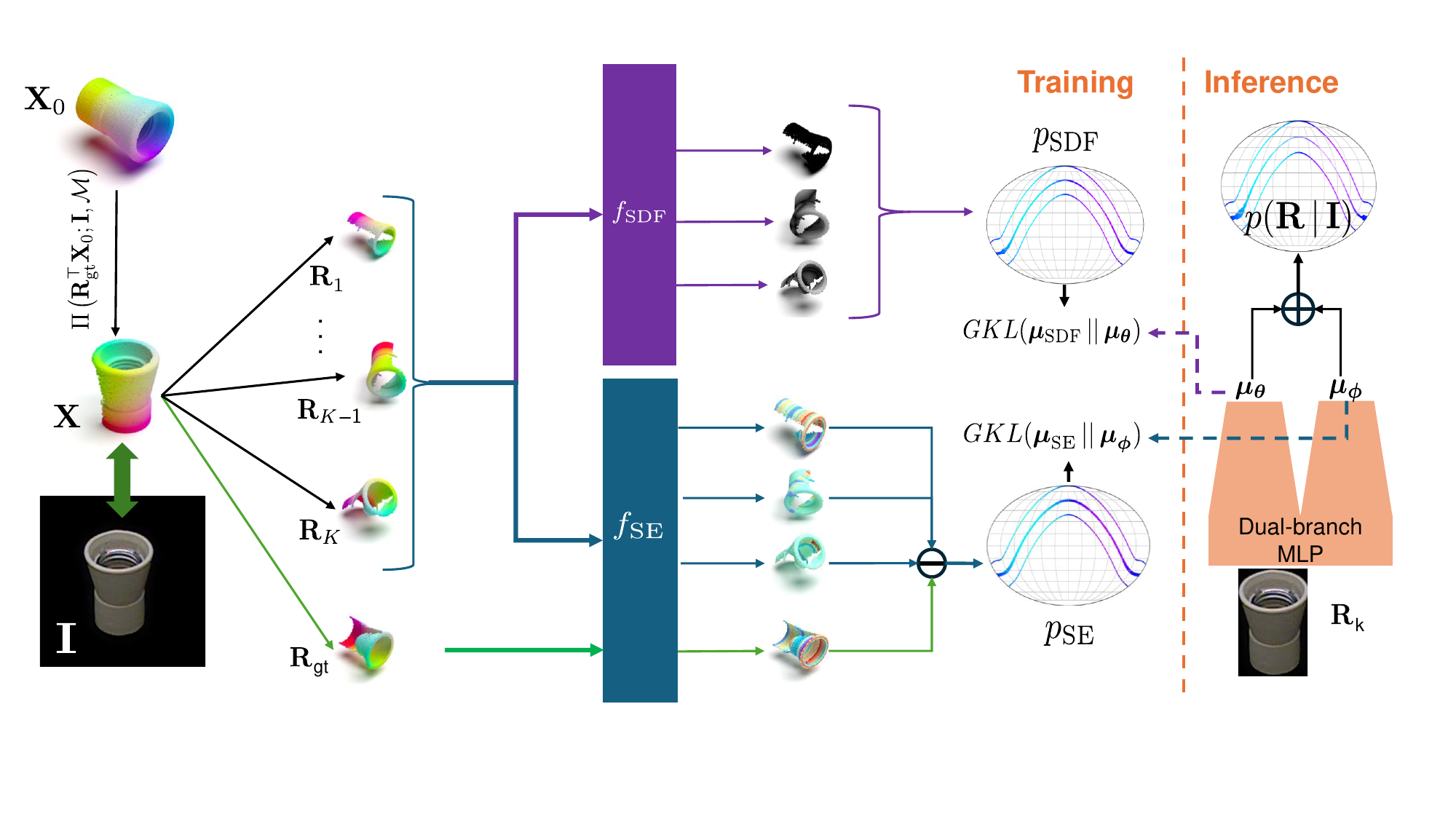}
        \caption{Training and Inference Pipeline: We employ a training mechanism where supervision is generated from pre-trained SurfEmb($f_\mathrm{SE}$) and SDF( $f_\mathrm{SDF}$) blocks. The CAD model, $\X_0$, undergoes a projection to render an image aligned point cloud, $\X$. The image aligned point cloud, $\X$, is rotated with ground truth rotation, $\Rot_{gt}$, and passed through $f_\mathrm{SE}$ block to estimate canonical features. Similarly, $\X$, is rotated with a random rotations, $\Rot_{k}$, and passed through $f_\mathrm{SE}$ block to generate features that are compared with canonical features to estimate the score $\mur_\mathrm{SE}$ for the rotation. Similarly, the rotated point cloud with a random rotation is passed through $f_\mathrm{SDF}$ to estimate the SDF values of the point cloud. An $L_0$ norm is applied to the SDF values to compute $\mur_\mathrm{SDF}$ score for the rotation. These scores are used to supervise the Dual-branch MLP. The Dual-branch MLP network takes an image and the same rotation matrix, $\Rot_{k}$, as input and infers two scores $\mur_\mathrm{\theta}$ and $\mur_\mathrm{\phi}$. This process is carried out for $K$ rotations for a given image and a Generalized KL divergence loss (GKL) is formulated between inferred scores from the right block and estimated scores from the left block to train the Dual-branch MLP network. The Dual-branch MLP is part of both training and inference. During inference, an image and rotations sampled from a grid are passed through the network to estimate the full distribution on the grid.}
        \label{fig:Teaser} 
\end{figure}

The rotational components of the 6D pose are particularly impacted by ambiguities, which mostly arise due to symmetry or self-occlusions~\cite{manhardt2019explaining}. Consequently, state-of-the-art approaches aim to characterize the density over $SO(3)$ (rotations), either directly in the form of mixtures of matrix distributions~\cite{deng2022deep,yin2023laplace,yin2022fishermatch} or through implicit neural fields~\cite{murphy2021implicit} and invertible neural networks~\cite{liu2023delving}.
These methods generally assume that the training data consists of posed images, and, in one way or another, learn to infer densities conditioned on an input image. Such an approach necessitates a large amount of training data, as the networks need to encounter the same appearance under various rotations.

In this paper, we introduce, \textbf{Alignist}, a new way of learning distributions over $SO(3)$, leveraging the CAD model that is readily available for a majority of 3D pose tasks and provided with typical datasets together with object poses either in instance or category form. 
To make use of this additional piece of shape information, we first establish the proportionality of image-conditional densities over rotations to densities over the visible parts of the CAD model. With this change of variables, we can sample the space of rotations during training and obtain a sharp and full density over the entire $SO(3)$ in the form of an unnormalized, empirical measure (scores). In particular, we model the desired density over the model as a \emph{products of experts}~\cite{hinton1999products}, reflecting the spatial and feature-space alignment to the original CAD model. We choose two expert probabilities to follow Bolzmann densities induced by (i) the distance to the original CAD model measured by the of norm of signed distances, (ii) the similarity of symmetry-respecting features obtained from SurfEMB~\cite{haugaard2022surfemb}. The first expert specifies to 3D geometric alignment whereas the second one is additionally informed by the appearance, as SurfEMB is trained also using the image cues. The final product-probability supervises a dual-branch MLP, which, given an input image, infers both of the distributions in the form of empirical distributions or measures by sampling over $SO(3)$. As a loss between the computed and inferred distributions, we utilize the recently proposed \emph{generalized KL divergence}~\cite{miller2023simulation}, capable of comparing unnormalized probability measures.
This training scheme, illustrated in \cref{fig:Teaser}, is agnostic to the network architecture and not only facilitates stronger cues for learning but also aids in generalization in a low-data regime, as the available CAD model is exploited to generate densely sampled distributions.
Lastly, as classical positional encoding methods do not generalize to the manifold of $SO(3)$~\cite{esteves2021generalized}, we introduce a new rotation encoding method by transforming the vertices of a unit cube under the given rotation.

In summary, our contributions are:
\begin{itemize}[noitemsep,leftmargin=*]
    \item To the best of our knowledge, we present the first method to use a CAD model to train an implicit network in order to infer densities over $SO(3)$.
    \item We propose to utilize geometry-aware (SDF) and symmetry-aware (SurfEMB) experts to obtain rich and informative supervision cues during training.
    \item Backed by a novel positional encoding, we propose a dual-branch MLP to infer two distributions, one on the SDF and one on the surface features.

    \item Experiments demonstrate the advantages of our approach in capturing ambiguities and uncertainties on textureless objects, especially in low-data regimes, thanks to powerful 3D object priors our method utilizes.
\end{itemize}
We make our implementation publicly available \href{https://github.com/shishirreddy/Alignist}{here}.

\section{Related Work}

While a plethora of 6D object pose estimation exist~\cite{hinterstoisser2013model,xiang2017posecnn,wang2021gdr,welsa, nerfpose, nerfeat, matchu}, we will specifically review works which deal with rotational symmetries in the context of deep learning. 

\paragraph{Rotation representations for deep networks}
Deep neural networks typically produce feature vectors in a Euclidean space, making it hard to reason on nonlinear manifolds, such as that of $SO(3)$. Early deep learning models opted for rather classical parameterizations for regressing rotations, like Euler-angles \cite{renderforcnn15, 3d-rcnn18}, direction cosine matrices (DCM)~\cite{huang2021multibodysync,yi2019deep}, axis-angle \cite{DeMoN17, deep-6dpose18}, and quaternions~\cite{xiang2017posecnn,posenet,zhao2020quaternion}. 
Some scholars~\cite{kehl2017ssd,busam2020like,su2022zebrapose} instead discretize the space and classify, however, pose space represents a continuous Riemannian manifold~\cite{busam2017camera}.
Recently,~\cite{zhou2019continuity} showed that the typical representations are \emph{discontinuous}, \ie, do not admit homeomorphic maps to $SO(3)$. Instead, 6D~\cite{zhou2019continuity}, 9D~\cite{levinson2020analysis} and even 10D~\cite{peretroukhin_so3_2020} representations were proposed to resolve the discontinuity issue and improve the regression accuracy. Br{\'{e}}gier~\cite{regression_manifold21} has thoroughly examined different manifold mappings, finding out that SVD orthogonalization~\cite{levinson2020analysis} performs the best when regressing arbitrary rotations.

Besides considering rotations as manifold-valued labels, there are multiple ways they are used in deep networks. Regressing rotations lead to geometric gradients, which require rethinking of backpropagation~\cite{teed2021tangent,chen2022projective}. When used as input data, the sinusoidal basis functions that are used to encode Euclidean coordinates become ill-suited for $SO(3)$~\cite{esteves2021generalized}.
In this work, we also contribute a symmetry-respecting \emph{positional encoding} for rotations, inspired by the metrics in~\cite{bregier2018defining} and also follow a regression approach, however, we investigate pose distribution prediction rather than single pose estimation to incorporate multimodal solutions induced by object or projection ambiguities.

\paragraph{Representing belief over rotations in deep networks}
Extending Bui~\etal~\cite{bui2020eccv} to object pose estimation, Deng~\etal~\cite{deng2022deep}, along with~\cite{Gilitschenski2020Deep}, were the first to address pose ambiguity and uncertainty prediction via deep networks. In these works, as well as extensions such as~\cite{peretroukhin_so3_2020}, a Bingham distribution was used to represent the belief over $SO(3)$. 
The suitability of Bingham distribution was later questioned by follow-up works, proposing Matrix-Fisher distributions~\cite{yin2022fishermatch} or heavy-tailed variants such as Laplace distributions~\cite{yin2023laplace,yin2023towards} for increased robustness.
Implicit-PDF (iPDF)~\cite{murphy2021implicit} used a rotation-conditioned neural network to implicitly learn the orientation distribution and SpyroPose~\cite{haugaard2023spyropose} proposed a coarse-to-fine version of iPDF using a hierarchical grid. HyperPosePDF~\cite{hofer2023hyperposepdf} learned the weights of an iPDF via a hyper-network and spherical convolutions are used in I2S~\cite{klee2022image} to map a deep feature-map onto a distribution over $SO(3)$.
Recently, Liu~\etal~\cite{liu2023delving} proposed a normalizing flow based approach to map an initial distribution to a target one. This approach has later been generalized to manifolds other than $SO(3)$ via free-form flows~\cite{sorrenson2023learning}.

We follow the ideas of implicit distribution learning by incorporating a signed distance function~\cite{park2019deepsdf} alongside a method for ambiguity aware pose description~\cite{haugaard2022surfemb} into our approach.

\paragraph{Uncertainty and ambiguity aware object pose estimation}
Object shape, symmetry, or occlusions can cause ambiguities in object poses given only the perspective projection.
Corona~\etal~\cite{corona2018pose} and Pitteri~\etal~\cite{pitteri2019object} assumed the availability of object symmetry information during training. As obtaining this information is challenging, others tried to explicitly learn it from data~\cite{lee2023object}. State-of-the-art approaches bypass the requirement of prior knowledge about object symmetries.
Manhardt~\etal~\cite{manhardt2019explaining} modeled the ambiguity in object pose estimation by predicting multiple hypotheses, for a given object’s visual appearance, and Shi~\etal~\cite{shi2021fast} utilize an ensemble of object pose estimators to derive uncertainties.
SurfEmb~\cite{haugaard2022surfemb}, EPOS~\cite{hodan2020epos} and NeRF-Feat~\cite{nerfeat} learned a dense distribution of 2D-3D correspondences. EPOS~\cite{hodan2020epos} handled symmetries implicitly by discretizing the surface and predicting a probability distribution over fragments per pixel, whereas SurfEmb, NeRF-Feat estimated a continuous distribution over the object surface. Similarly, Ki-pode~\cite{iversen2022ki} used object keypoints as an intermediary to derive the probability density, explicitly and~\cite{yang2023object} makes use of the \emph{conformal prediction} framework to propagate uncertainties from keypoints to object poses.

In our pipeline, we incorporate a learned continuous distribution~\cite{haugaard2022surfemb} of 2D-3D correspondences between object model and image.
\section{Method}
\paragraph{Problem setting}
We consider an input image crop $\Img\in\R^{M\times N\times 3}$ parameterized by coordinates $\pt\in\R^2$ on the image lattice, containing the projection of a 3D object $\Mesh=(\X,\Tri)$ represented in normalized object coordinate space (NOCS)~\cite{wang2019normalized} $\X=\{\x_i\in [-1,1]^3\ \mid i = 1,\ldots,N_X\}$ together with triangle faces $\Tri=\{\tb_i\in\Zbb^3  \mid i = 1,\ldots,N_T\}$. 
We formulate the rotation estimation problem as a probabilistic inference, where we are interested in the following two quantities:
\begin{enumerate}[itemsep=0.5pt,leftmargin=*,topsep=1pt]
    \item Single-best orientation obtained by the maximum a-posteriori (MAP) estimate: $\Rot^\star = \argmax_{\Rot} p (\Rot\given{\Img}) $, where $p (\Rot\given{\Img})$ models the conditional probability distribution for the orientation of $\Img$.
    \item Multi-modal orientation estimate in the form of a full posterior distribution: 
    $p(\Rot\given{\Img})\propto p(\Img\given{\Rot})\times p(\Rot) = p(\Img, \Rot)$.

\end{enumerate}
A plethora of the 6D object pose estimation methods solve (1) (excluding translations) to get a point estimate~\cite{kehl2017ssd,hodan2018bop,su2022zebrapose}. While solving (2) can provide important additional information, such as
\emph{uncertainty} and \emph{ambiguity}, perhaps not surprisingly it is a much harder task. Note that, in addition to the usual difficulties associated with posterior characterization, in our context we are facing extra challenges due to the Riemannian nature of $SO(3)$~\cite{busam2017camera}. This is arguably the reason why the task of \emph{multiple simultaneous rotation estimations} remains unsolved, to date.
In this work, we focus on estimating posterior densities over rotations, which are the main sources of ambiguity.

\paragraph{Model}
Inspired by the \emph{energy based models}~\cite{lecun2006tutorial,landau2013course}, we write down the posterior density of interest as $p(\Rot\given{\Img})=\exp{\left(-U\left(\Rot\right)\right)}/p(\Rot)$ where $U$ is the \emph{potential energy} with the form $U(\Rot)=-\left(\log p(\Rot\given{\Img})+\log p(\Rot)\right)$.
Estimating $p(\Rot)$ by discretizing the $SO(3)$ by $N_V$
partitions of volume $V$ and marginalizing, leads to the following approximate posterior density~\cite{murphy2021implicit}:
\begin{align}\label{eq:normalizeR}
    p(\Rot\given{\Img}) \approx  \frac{1}{V} {\exp{\left(-U\left(\Rot\right)\right)}} \;/\; {\sum\limits_{i=1}^{N_V} \exp{\left(-U\left(\Rot_i\right)\right)}} = \frac{1}{V}\mathrm{softmax}\left(-U(\Rot)\right).
\end{align}

To design $U(\Rot)$, we assume the availability of a $\X_0$, s.t. $\X^\prime = \Rot\Pi(\Rot_\mathrm{gt}^{\top}\X_0;\Img,\Mesh)$ 

is obtained by rotating the \emph{image-aligned} model $\X$ back to the CAD space using $\Rot$. $\Rot_{\mathrm{gt}}$ is the ground truth pose of the image under consideration and the rendering operator $\X=\Pi(\Rot_{\mathrm{gt}}^\top\X_0;\Img,\Mesh)$ extracts the points of the transformed mesh, visible in the image $\Img$ by rendering the model mesh $\Mesh$.
In order to maintain the invariance under symmetries, \ie, $\Rot\X\equiv\X$ for any \emph{stabilizer} $\Rot$ of $\X$, we relate orientation distributions to distributions over object coordinates:
\begin{prop} {The probability} $p(\Rot\given{\Img})\propto p(\X^\prime \given{\Img})$ where $\X^\prime = \Rot\Pi(\Rot_\mathrm{gt}^{\top}\X_0;\Img,\Mesh)$. 

\end{prop}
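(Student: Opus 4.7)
The plan is to interpret the claim as a change-of-variables identity on $\SO$. Since $\Img$ (and hence $\Rot_{\mathrm{gt}}$) is fixed, the image-aligned visible point cloud $\X = \Pi(\Rot_{\mathrm{gt}}^{\top}\X_0;\Img,\Mesh)$ is a deterministic quantity, so the map
\[
\Phi_\X : \SO \to \R^{3N_X}, \qquad \Phi_\X(\Rot) := \Rot\X
\]
is a function of $\Rot$ alone and $\X' = \Phi_\X(\Rot)$. First, I would interpret $p(\X'\given\Img)$ as the density of the pushforward $(\Phi_\X)_{*}\,p(\cdot\given\Img)$ on the orbit $\Phi_\X(\SO)\subset\R^{3N_X}$ and aim to show that its pullback via $\Phi_\X$ agrees with $p(\Rot\given\Img)$ up to a multiplicative constant, which is all the proportionality asserts.

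The core computation is the differential of $\Phi_\X$. Writing $\T_{\Rot}\SO$ as $\{\Rot\Skew : \Skew^{\top}=-\Skew\}$, one obtains $d\Phi_\X(\Rot)[\Rot\Skew]=\Rot\Skew\X$, and since left-multiplication by $\Rot$ is a Euclidean isometry on $\R^{3N_X}$ the norm $\|\Rot\Skew\X\|=\|\Skew\X\|$ does not depend on $\Rot$. Hence the Gram matrix of $d\Phi_\X$ with respect to the bi-invariant metric on $\SO$ is constant along $\SO$, so $|\det d\Phi_\X(\Rot)|$ is constant as well. By the area/coarea formula, the pushforward of the Haar measure is then a constant multiple of the Riemannian volume on the orbit, which yields the proportionality of the two densities when we pull back along $\Phi_\X$.

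To close the argument I would treat the symmetric case explicitly. Let $G_\X := \{\Rot\in\SO : \Rot\X=\X\}$ denote the stabilizer of the visible cloud; the map $\Phi_\X$ factors through $\SO/G_\X$, and the symmetry-aware prior forces $p(\Rot\given\Img)$ to be constant on right cosets $\Rot G_\X$. Thus the density descends to the quotient and the same change-of-variables argument applies, so the proportionality holds as a $G_\X$-invariant identity on $\SO$, matching the statement's prescription $\Rot\X\equiv\X$. The hard part will be the degenerate case of continuous symmetries, where the orbit is strictly lower-dimensional than $\SO$ and one must first construct a well-defined left-invariant volume form on $\SO/G_\X$; once that is in place, constancy of the Jacobian and the proportionality follow by routine manipulations of Haar measure.
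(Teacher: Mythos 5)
Your proof is correct and reaches the same conclusion as the paper's, but by a genuinely different and more intrinsic route. The paper vectorizes: it sets $\rb=\vec{\Rot}\in\R^9$, observes that $\x'=\vec{\Rot\C}=(\C^\top\otimes\Id)\,\rb$ is a \emph{fixed linear map} of $\rb$, and reads off a constant (pseudo-)Jacobian $\sqrt{|\C\C^\top|}$; proportionality is then immediate because a linear map has a $\Rot$-independent derivative. You instead work on the manifold: you restrict the differential of $\Phi_\X$ to $\T_{\Rot}\SO$, use that left translation is simultaneously an isometry of the bi-invariant metric on $\SO$ and of $\R^{3N_X}$ to conclude that the $3\times 3$ Gram determinant of $d\Phi_\X$ is constant, and invoke the area formula to push forward the Haar measure. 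Both arguments hinge on the same fact --- the volume distortion of $\Rot\mapsto\Rot\C$ does not depend on $\Rot$ --- but yours is the more careful one: the density $p(\Rot\given{\Img})$ lives on a three-dimensional manifold, so the relevant Jacobian is the intrinsic $3\times 3$ Gram matrix rather than the ambient Kronecker-product matrix the paper differentiates (the paper's factor is still constant, so the proposition survives, but its intermediate identity $p(\rb\given{\Img})=p(\x'\given{\Img})\cdot|\J|$ is only heuristic as stated). You also explicitly confront the stabilizer $G_\X$ and the degenerate, lower-dimensional orbit arising under continuous symmetry, which the paper silently assumes away; that is the one place where your argument still owes a construction (the invariant volume on $\SO/G_\X$), but you flag it honestly and it does not affect the claimed proportionality away from the degenerate case.
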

\begin{proof}
We will consider the vectorized versions of our variables $\rb:=\vec{\Rot}$ and $\x^\prime:=\vec{\X^\prime}$ and assume $p(\rb\given{\Img})=p(\Rot\given{\Img})$ and $p(\x^\prime\given{\Img})=p(\X^\prime\given{\Img})$. Let $\C=\Pi(\Rot_\mathrm{gt}^{\top}\X_0;\Img,\Mesh)$ 

denote the visible part of the model, {w.l.o.g.} assumed to be a constant. Then, the two distributions can be related by the \emph{change of variables}:
    \begin{align}
        p(\rb\given{\Img})= p(\x^\prime \given{\Img}) \cdot |\J|,
   \end{align}
   where
   \begin{align}
   \J := \frac{\diff \x^\prime}{\diff \rb}=\frac{\diff \vec{\Rot\C}}{\diff\rb} = \frac{\diff \left(\left(\C^\top \otimes \Id_{N_X}\right)\rb\right)}{\diff\rb}=\C^\top \otimes \Id_{N_X},
   \end{align}
   where $\Id_{N_X}$ is the $N_X\times N_X$ identity matrix.
    Hence the determinant of the Jacobian, $|\J|:=|\det\J|$, is independent of $\Rot$:
    \begin{align}
    |\J|=|\C^\top \otimes\Id_{N_X}|=\sqrt{|(\C^\top \otimes\Id_{N_X})^\top(\C^\top \otimes\Id_{N_X})|}=\sqrt{|\C\C^\top|}.
    \end{align}
    As the canonical model {$\Mesh$} (hence $\X_0$) and $\Rot_{\mathrm{gt}}$ are known, $|\J|$ remains constant leading to: 
    \begin{align}
    p(\Rot\given{\Img})= p(\X^\prime \given{\Img}) \cdot \sqrt{|\C\C^\top|}\propto p(\X^\prime \given{\Img}).\nonumber\tag*{\qedhere} 
    \end{align}
   
\end{proof}
\begin{comment}
    \begin{proof}
    The two distributions can be related by the \emph{change of variables}:
    \begin{align}
        p(\Rot\given{\Img})= p(\X^\prime \given{\Img}) \cdot |\J|, \quad\mathrm{where}\quad \J = \frac{\diff \X^\prime}{\diff\Rot} = \frac{\diff (\Rot\Rot_\mathrm{gt}^{\top}\X_0)}{\diff\Rot}=\Rot_\mathrm{gt}^{\top}\X_0,
    \end{align}
    and $|\J|:=|\det\J|$ is independent of the choice of $\Rot_\mathrm{gt}$:
    \begin{align}
    |\J|=|\Rot_\mathrm{gt}^{\top}\X_0|=\sqrt{|(\Rot_\mathrm{gt}^{\top}\X_0)^\top\Rot_\mathrm{gt}^{\top}\X_0|}=\sqrt{|\X_0^\top\X_0|}.
    \end{align}
    As the canonical model $\X_0$ is known, $|\J|$ remains constant leading to: 
    \begin{align}
    p(\Rot\given{\Img})= p(\X^\prime \given{\Img}) \cdot \sqrt{|\X_0^\top\X_0|}\propto p(\X^\prime \given{\Img}).
    \end{align}
    \qed
\end{proof}
\end{comment}

With this proportionality, we can instead model $p(\X^\prime \given{\Img})$ in a symmetry aware fashion via the \emph{product of experts} (PoEs)~\cite{hinton1999products}:
\begin{align}\label{eq:pXI}
    p(\X^\prime\given{\Img}) = \frac{1}{\int \prod_i \hat{p}_i(\X^\prime\given{\Img}) \diff\X^\prime}\prod_i \hat{p}_i(\X^\prime\given{\Img})= \frac{1}{Z}\hat{p}_\mathrm{SDF}(\X^\prime\given{\Img})\hat{p}_\mathrm{SE}(\X^\prime\given{\Img}),
\end{align}
where $Z = \int \hat{p}_\mathrm{SDF}(\X^\prime\given{\Img})\hat{p}_\mathrm{SE}(\X^\prime\given{\Img})\diff\X^\prime$.  Note that the product factorizes the conditional probability $p$ into a set of simpler distributions, \emph{experts}, $\hat{p}_i$ which are normalized after multiplication.
The first term, {$\hat{p}_\mathrm{SDF}$}, will measure the deviation from the model in the space of 3D coordinates.
The second term, {$\hat{p}_\mathrm{SE}$}, will ensure that the symmetry-aware features of the transformed model align with the canonical ones under a given $\Rot$ (thus $\X$).
The PoE framework can work with unnormalized densities, $\hat{p}(\cdot)$, allowing us to model $ \hat{p}_\mathrm{SDF}$ and $\hat{p}_\mathrm{SE}$ by a Bolzmann distribution:
\begin{align}\label{eq:Uloss}
\hat{p}_\mathrm{SDF}\propto \exp\left(-\|f_{\mathrm{SDF}}\left(\X^\prime\right)\|_0\right) \,\,\,\mathrm{and}\,\,\,\,\hat{p}_\mathrm{SE}\propto\exp\left(-\|f_{\mathrm{SE}}\left(\X^\prime\right)-f_{\mathrm{SE}}\left(\X_{0}\right)\|_\mathrm{F}\right),
\end{align}
where $f_{\mathrm{SDF}}:\R^3\to\R$ and $f_{\mathrm{SE}}:\X\to\R^{N_F\times N_X}$ are chosen to be the pretrained Deep-SDFs~\cite{park2019deepsdf} and SurfEmb~\cite{haugaard2022surfemb}, respectively. 
Deep-SDF parameterizes the signed distance value to the closest surface point via a neural network. Using $L_0$-norm amplifies the penalty on more distant points. SurfEmb models continuous 2D-3D correspondence $(\X \ni \x \leftrightarrow \pt \in \R^2)$ distributions $p(\x,\pt\given{\Img})=p(\pt\given{\Img})p(\x\given{\Img,\pt})$ over the surface of objects, where $p(\pt\given{\Img})$ denotes the discrete distribution over image coordinates. SurfEmb consists of a network supervised to maximize the probability of the GT coordinates. 
The specific forms of $f_{\mathrm{SDF}}$ and $f_{\mathrm{SE}}$ are precised later in~\cref{sec:network}. 

In our framework, we model these distributions by two MLPs $f_{\params}$ and $f_{\paramstwo}$ with learnable parameters $\params$ and $\paramstwo$, which can be multiplied and passed through the $\mathrm{softmax}$ in~\cref{eq:normalizeR} to yield $p(\Rot\given{\Img})$.

\noindent\textbf{Training.} 
During training, we have access to $\{\Rot^n_{\mathrm{gt}}\}_n$  for all images and thus corresponding $\{\X^\prime_n\}_n$. 
While maximizing $p(\Rot\given{\Img})$ by minimizing the negative log-likelihood for a single ground truth pose is viable, it requires more data samples to cover all symmetries and train the network. 
Another alternative is contrastive training (CT) as done in~\cite{haugaard2023spyropose}. 
Yet, besides the challenge of sampling \emph{hard} negatives over $SO(3)$ during training, CT assumes a single positive label per sample.
This resembles learning a multi-modal distribution by matching several unimodal distributions. Instead, we take a different approach and align the entire distribution. 
To learn the parameters of $(f_{\params},f_{\paramstwo})$ this way, we first obtain two \emph{unnormalized empirical measures} ($\mur_\mathrm{SDF},\mur_\mathrm{SE}$) supported on $SO(3)$, corresponding to $\hat{p}_\mathrm{SDF}$ and $\hat{p}_\mathrm{SE}$, by randomly sampling $SO(3)$ to get the \emph{locations} and computing the individual terms in~\cref{eq:Uloss} as \emph{weights} of these measures. 
Similarly, we let $(\mur_\mathrm{\params},\mur_\mathrm{\paramstwo})$ denote the inferred unnormalized measures (see "Inference") and use the generalized KL divergence~\cite{miller2023simulation} to align unbalanced distributions:

\begin{align}
    \label{eq:GKL}
    \mathrm{GKL}\infdivx{\mur}{\muc} := \sum\limits_{i=1}^m \left(-\log\left(\frac{a^{\muc}_i}{a^{\mur}_i}\right) + \left(\frac{a^{\muc}_i}{a^{\mur}_i}\right) -1\right){a^{\mur}_i},
\end{align}
where $\ab=\{a^{\mur}_i\}$ and $\{\Rot^{\mur}_i\}, i=1,\dots, m$ denote the weights and locations of a discrete measure $\mur = \sum\nolimits_{i=1}^m a_i \delta_{\Rot_i^{\mur}}, a_i\geq 0$, where $\delta_{\Rot^{\mur}}$ is a Dirac delta at $\Rot^{\mur}$. Finally, we train our networks by optimizing the following objectives:
\begin{align}
    \params^\star = \argmin_{\params}\E_{p(\Img)}\mathrm{GKL}\infdivx{\mur_\mathrm{SDF}}{\mur_\mathrm{\params}},
    \quad \paramstwo^\star = \argmin_{\paramstwo}\E_{p(\Img)}\mathrm{GKL}\infdivx{\mur_\mathrm{SE}}{\mur_\mathrm{\paramstwo}},
\end{align}
where $p(\Img)$ denotes the data distribution.
In summarry (see~\cref{fig:Teaser}), to train our networks, we sample the $SO(3)$ randomly or over a grid and use these samples in (i) explicitly computing a product-distribution by leveraging the spatial and feature-space distances and (ii) inferring these conditioned on the image. The divergence between the resulting measures becomes our supervision signal. 

\paragraph{Inference} 
During test time, given an input image $\Img$, we infer the potentially multimodal distribution $p(\Rot\given{\Img})$ by densely sampling $SO(3)$ and evaluating $-\log p(\Rot\given{\Img})= f_{\params}(\Img,\Rot) + f_{\paramstwo}(\Img,\Rot)$. The individual predictions can be thought of as the empirical measures $\mur_{\params}$ and $\mur_{\paramstwo}$, which are combined to get the final measure. We then pass the result through $\mathrm{softmax}$ to arrive at the full posterior. 

\subsection{Network Design, Positional Encoding and Impl. Details}
\label{sec:network}
\paragraph{Network architecture}
Our architecture is similar to IPDF with a ResNet50 image encoder and an MLP each for inferring unnormalized SDF and SE measures. The image encoder encodes the image as a global latent vector which is concatenated with the positional encoded rotation matrix and passed through two MLPs to get unnormalized SDF and SE measures. For a given image, we sample $K$ rotations, to learn the distribution at $K$ discrete points in $SO(3)$ space. To train the network, for a given image, we generate the supervision signal using pre-trained $\mathrm{SE}$ and $\mathrm{SDF}$ MLPs to gather scores for all rotations for supervising the network. The pre-trained $\mathrm{SE}$ and $\mathrm{SDF}$ MLPs are two layer Siren MLPs that predict per-point feature and  $\mathrm{SDF}$ value respectively from a 3D point input.                     

\paragraph{Sampling}
Using our CAD-based priors, SurfEmb and SDF MLPs, we can generate the distribution for viewpoints before we start the training. We sample rotations on an equivolumetric partitioning of $SO3$ grid proposed by Yershova\cite{Yershova2010}. We subdivide the grid up to 5 levels and estimate distribution for certain viewpoints. Using this precomputed distribution, we can sample more rotations near the modes which helps us in learning distribution better. In our approach, instead of estimating distribution for the entire dataset, we estimate distributions for a few viewpoints. It is a tradeoff between memory and accuracy. We save distribution for a few viewpoints at a higher resolution grid than saving distribution for every viewpoint at a lower level grid. This is because the distribution can be transferred to another viewpoint by rotating the grid accordingly using the relative rotation between the given viewpoint and the viewpoint for which the distribution is estimated. We randomly sample a viewpoint from the precomputed distributions and convert it to the current viewpoint and sample the rotations from this distribution. Even though the viewpoints don't exactly have the same distribution in some cases, this won't affect the learning as this will be reflected in the estimated probability and still a bad rotation will be learned as such. Even if we sample rotations away from modes sometimes, the supervision signal will provide a low probability, but it is important to sample more near the modes to learn a sharper distribution. We precompute both shape and feature based distribution for a few viewpoints and use them during training for better sampling. Shape-based precomputed distribution will be valid even in the case of conditionally symmetric objects with ambiguities that break symmetry.

\paragraph{Positional encoding(PE)}
IPDF employs a PE that generates high frequency components of individual rotation matrix elements separately. PE is not applied to the rotation manifold leading to errors in the training where multiple rotations that are far apart in $SO3$ space with similar element-wise absolute values are closer in positional encoding space in some cases. Specifically, when some negative values are lower in magnitude in the rotation matrix, they get a similar PE to when the sign is flipped in another rotation matrix since sines are getting suppressed because of low value and cosine is getting amplified at the same time and also suppressing the sign. The PE is not exactly the same, but the network treats them as similar because the change is minute in cartesian space. To alleviate this issue, we take the 3D corner coordinates of a unit cube and apply positional encoding on the rotated 3D corner coordinates of the cube. The positionally encoded corner coordinates are concatenated to generate a vector that serves as a PE representation of the rotation matrix.  This reduces the noise as shown in Figure \ref{fig:PE} and also brings a boost in accuracy.

\section{Experiments}
We evaluate on Symsol \cite{murphy2021implicit}, T-Less\cite{hodan2017tless}, and ModelNet10-SO3\cite{LiaoCVPR19}. Symsol has two subsets, Symsol-I with untextured symmetric objects and Symsol-II with conditionally symmetric objects with some texture that breaks symmetry from certain viewpoints. We adopt the metrics from IPDF, log-likelihood(LL), and Average Recall(AR) for evaluation. LL measures how strongly the learned network captures the distribution. It is calculated by taking the log mean of the probabilities at the ground truth rotations in our captured distribution. As we can't estimate distribution in continuous rotation space, we compute our distribution on a discrete equivolumetric grid of rotations similar to IPDF. We use 5 levels of subdivision to obtain a rotation grid with around 2M rotations for evaluation. We compute the distribution on the grid and evaluate the LL for rotations at the discrete rotation samples on the grid closest to ground truth rotations. LL evaluation can be carried out with a single ground truth annotation or with multiple ground truth annotation for a single image. To quantify the rotation error on ModelNet, we employ AR with $30^{\circ}$ rotation error threshold. We employ distribution visualization proposed in IPDF where each point on sphere indicates the axis of rotation and the color indicates the tilt about that axis.       
\begin{table}[!htp]\centering
\caption{SYMSOL I results: We present results on Symsol-I evaluated using the log-likelihood metric. Models refers to the number of models trained per dataset. Iterations refer to training iterations and Images refer to the amount of training images employed per object. Note that we follow the convention employed in the Normalizing Flow(NF) paper by adding 2.29 to the scores to make a uniform distribution to have a zero score instead of having -2.29. We adjusted the scores for us and Spyropose(SP) and other approaches were already adjusted in NF paper. We present results for Deng\cite{deng2022deep}, Gil\cite{Gilitschenski}, Prok\cite{prokudin}, IPDF\cite{murphy2021implicit}, SP\cite{haugaard2023spyropose} and NF\cite{liu2023delving}.} \label{tab:symsol_loglik}
\label{tab:symsol_1}
\setlength{\tabcolsep}{2.5pt}
\begin{tabular}{l|ccccccccc|cc}
&Deng &Gil. &Prok. &IPDF &SP &NF &NF &NF &NF &Ours &Ours \\\toprule
Models &1 &1 &1 &1 &5 &1 &1 &1 &1 &1 &1 \\ \hline
Iterations &100k &100k &100k &100k &100k &100k &900k &100k &900k &100k &100k \\ \hline
Images &45k &45k &45k &45k &45k &10k &10k &45k &45k &10k &45k \\ \hline

cone &2.45 &6.13 &-1.05 &6.74 &9.91 &8.45 &8.94 &8.42 &10.05 &9.66 &10.10 \\
cube &-2.15 &0.00 &1.79 &7.10 &10.92 &5.02 &9.01 &7.13 &11.64 &11.29 &12.24 \\
cyl &1.34 &3.17 &1.01 &6.55 &8.75 &8.04 &6.41 &7.83 &9.54 &9.32 &9.40 \\
icosa &-0.16 &0.00 &-0.10 &3.57 &7.52 &-2.14 &-6.03 &2.03 &8.26 &7.99 &9.54 \\
tet &2.56 &0.00 &0.43 &7.99 &10.98 &5.91 &10.79 &8.98 &12.43 &11.39 &11.96 \\\midrule
avg &0.81 &1.86 &0.42 &6.39 &9.62 &5.06 &5.82 &6.88 &10.38 &\textbf{9.69} &\textbf{10.64} \\
\bottomrule
\end{tabular}
\end{table}

\subsection{SYMSOL-I}
Symsol-I comprises textureless geometric shapes, Cube, Tetrahedron, Cone, Icosahedron, and Cylinder that exhibit different types of symmetries and pose a challenging task to capture all the symmetric configurations with high probability. The dataset provides 45k images for each object and they do not provide camera intrinsics or a CAD model. However, these generic CAD models are easily available from any 3D library and are already canonically aligned with the dataset. We employ PBR data\cite{Denninger2023} and SDF samples from CAD models to train  $f_{\mathrm{SE}}, f_{\mathrm{SDF}}$ respectively. We use the frozen $f_{\mathrm{SE}}, f_{\mathrm{SDF}}$ MLPs for training our pipeline. The test set provides ground truth annotations for all the symmetric configurations and is considered in the evaluation for log-likelihood and spread. Prior approaches focus solely on learning distribution from images and show no way to incorporate a CAD model into the training pipeline. Spyropose tries to incorporate a CAD model, but only uses it to strengthen the encoder instead of using a Resnet based encoder. However, their approach requires camera intrinsics. So, they render their version of symsol to use the encoder. Moreover, their cad-based encoder requires the object instance to be known to encode the image.
\setlength{\intextsep}{2pt}
\begin{wrapfigure}{r}{0.37\textwidth}
\centering
\includegraphics[width=0.37\textwidth]{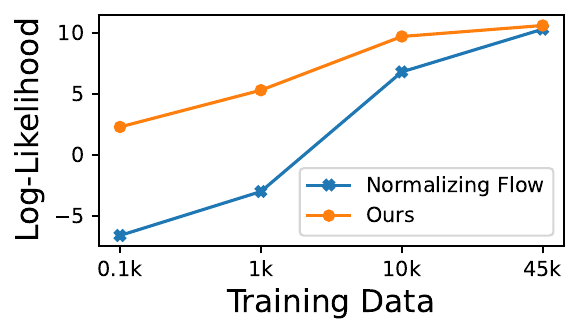}
\caption{LL vs. Training Data for NF, Ours on SYMSOL-I.}
\label{fig:dataplot}
\end{wrapfigure}
To have a common baseline, we compare all the approaches using Resnet image encoder. Using a basic architecture such as IPDF, we achieve benchmark results with log-likelihood scores of  $10.64$ and $9.69$ in full-data and low-data regimes using our training pipeline by distilling knowledge from a CAD model to learn distribution as shown in Table \ref{tab:symsol_loglik}. Our approach converges faster and performs better than NF, SpyroPose with just 100k iterations. In a low-data regime with 10k images, the accuracy drop($0.95$) is not significant depicting the benefits of a CAD prior. Besides, NF has weaker performance with low data and negatively affects some objects with increasing iterations. This demonstrates the advantage of employing a CAD model which is crucial as real data is not easily available in large amounts. Our approach can transfer learned features from the CAD model and help learn the distribution better in the real domain even with fewer images. As shown in Fig.~\ref{fig:dataplot}, the gap between the LLs attained by our method and those by NF increases rapidly as the amount of training data decreases. We present some qualitative results in Fig. \ref{fig:SymsolViz} as proposed in IPDF.

\subsection{SYMSOL-II}
Symsol-II comprises conditionally symmetric objects, tetrahedron, cylinder, and sphere, with markers. So, the modes are reduced when the markers are visible. For conditional symmetric objects, the pose distribution not only shifts but also changes completely based on the visibility of markers. We fit the meshes with silhouette images and estimate intrinsics which is later used to fit the texture of the mesh using color images through differentiable rendering. We train $f_{\mathrm{SE}}, f_{\mathrm{SDF}}$ networks with the obtained mesh. Our approach has slightly weaker performance compared to NF as shown in Table \ref{tab:symsol2}. On the Sphere object, our SDF expert doesn't contribute anything as it learns a uniform distribution and it is a limitation of our approach. Hence, we have to rely only on the SurfEmb expert to explain the distribution which decreases the performance. However, we perform better than NF and Spyropose in the low-data regime with 10k images. Besides, LL metric inherently favors NF as it provides exact LL estimation while IPDF and our approach evaluate LL on a discrete rotation grid, making our methods limited by the finite resolution. The performance drop is less significant on textured cylinder and tetrahedron indicating that the shape component can bring a significant boost to the distribution. LL scores for tetrahedron and cylinder are higher than the Symsol-I dataset since Surfemb features provide information about the texture in these objects and perform better in conjunction with shape experts to capture the textured viewpoints with much finer distribution.

\begin{table}[!t]\centering
\caption{SYMSOL-II results: We report results using log-likelihood metric(LL). We adjust the scores for us and SpyroPose using the convention of normalizing flows(NF) by adding 2.29 to make a uniform distribution to have an average LL of score zero.-10k refers to experiments in low-data regime using only 10k images instead of 45k.}\label{tab:symsol2}
%\scriptsize
\setlength{\tabcolsep}{3.pt}
\begin{tabular}{l|cccccccc|cc}

Obj &Deng &Gil. &Prok. &IPDF &SP &SP-10k &NF-10k &NF &Ours-10k&Ours \\\hline
SphX &3.41 &5.61 &-1.90 &9.59 &11.36 &7.67 &7.62 &12.37 &6.32 &10.93 \\
cylO &5.28 &7.17 &6.45 &9.20 &11.61 &9.11 &6.99 &12.92 &11.57 &12.18 \\
tetX &5.90 &5.19 &3.77 &10.78 &11.70 &6.48 &3.52 &13.53 &11.53 &12.38 \\\hline
LL &4.86 &5.99 &2.77 &9.86 &11.56 &7.76 &6.04 &\textbf{12.94} &\textbf{9.80} &11.83 \\
\end{tabular}
\end{table}

\begin{figure}
     \centering
     \begin{subfigure}{.3\linewidth}
     \centering
     \includegraphics[width=1\textwidth]{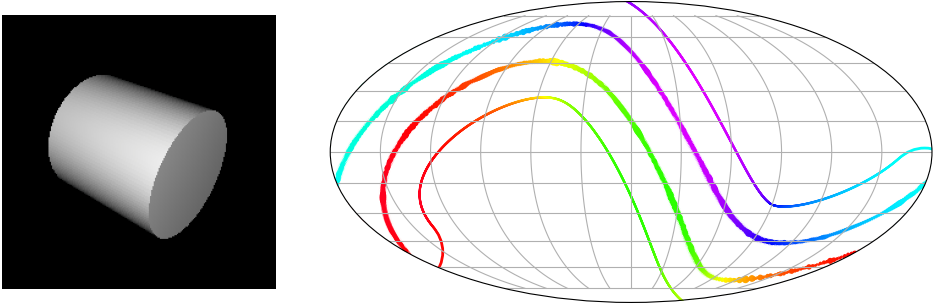}
     \caption{Cyl-Symsol-I\label{fig:cylS1}}
     \end{subfigure}%
     \hfill
     \begin{subfigure}{.3\linewidth}
     \centering
     \includegraphics[width=1.\textwidth]{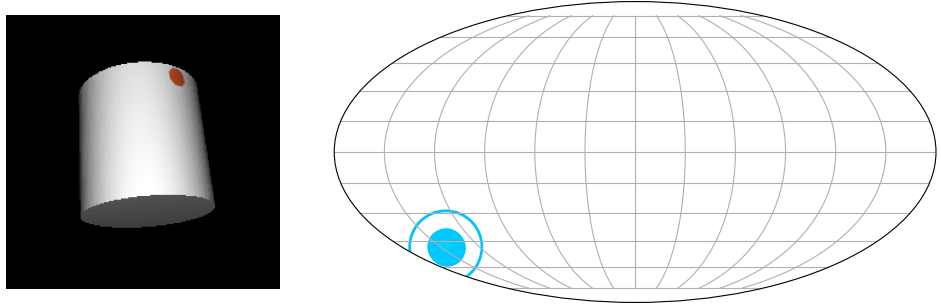}
     \caption{Cyl-Symsol-II\label{fig:cylS2}}
     \end{subfigure}%
     \hfill
     \begin{subfigure}{.3\linewidth}
     \centering
     \includegraphics[width=1\textwidth]{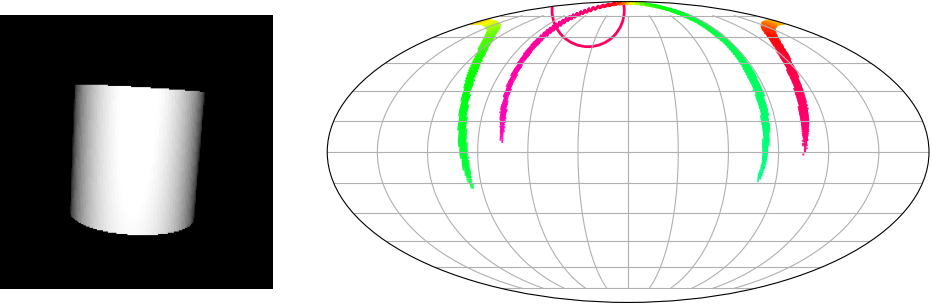}
     \caption{Cyl-Symsol-II\label{fig:cyls2S2}}
     \end{subfigure}%
     \hfill
     \\
     \begin{subfigure}{.3\linewidth}
     \centering
     \includegraphics[width=1\textwidth]{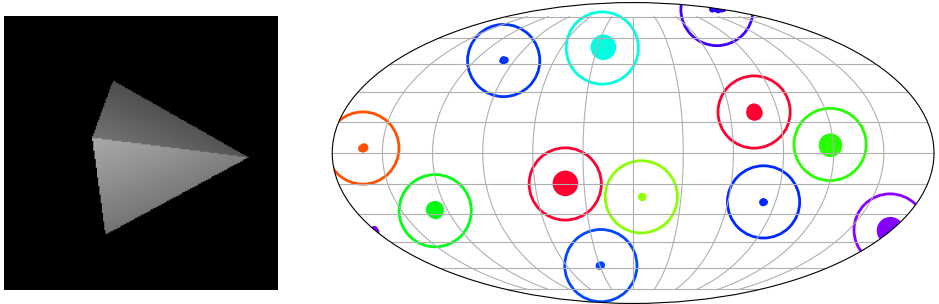}
     \caption{Tet-Symsol-I\label{fig:tetS1}}
     \end{subfigure}%
     \hfill
     \begin{subfigure}{.3\linewidth}
     \centering
     \includegraphics[width=1.\textwidth]{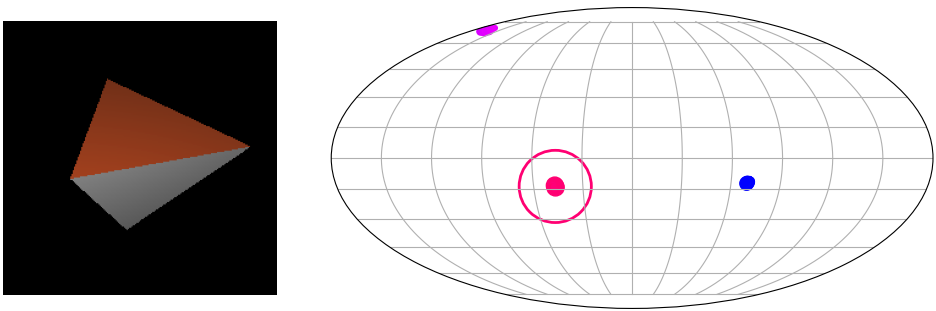}
     \caption{Tet-Symsol-II\label{fig:tetS2}}
     \end{subfigure}%
     \hfill
     \begin{subfigure}{.3\linewidth}
     \centering
     \includegraphics[width=1\textwidth]{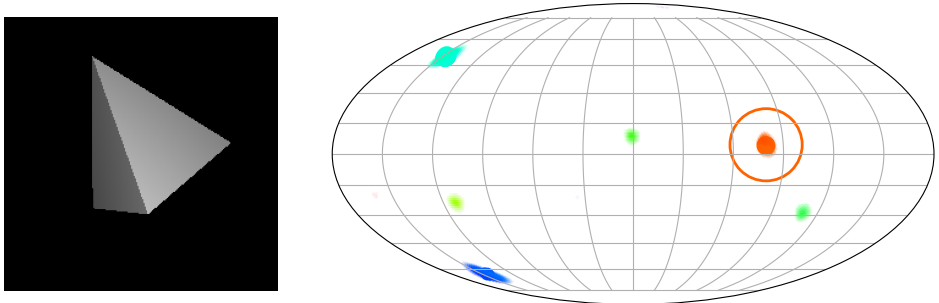}
     \caption{Tet-Symsol-II\label{fig:tet2S2}}
     \end{subfigure}%
     \hfill
     \caption{Pose distribution visualization for objects in Symsol-I and Symsol-II. a) Untextured Cylinder has continuous symmetry b) Textured Cylinder with marker has a unimodal distribution when the marker is visible, c) Broken continuous symmetry on the textured cylinder when the marker is not visible d) Untextured Tetrahedron in Symsol-I has 12 modes which are captured appropriately e) Textured Tetrahedron has three modes when orange face is visible. f) Textured terahedron has 6 modes when the orange face is not visible. Note that only one ground truth annotation is provided in Symsol-II and hence only one mode is circled.  \label{fig:SymsolViz}}
\end{figure} 
\subsection{T-Less}
T-Less contains 30 texture-less symmetric objects. We follow \cite{Gilitschenski} to split data into 75\% for training and 25\% for testing. We train SurfEmb, $f_{\mathrm{SE}}$, using PBR and 75\% of the real data. $f_{\mathrm{SDF}}$ network is trained using the CAD model. We perform better than the benchmark approach, SpyroPose, as shown in Table \ref{tab:tless}. The dataset has minute symmetry breaking features that can turn a multimodal distribution into an unimodal distribution. Although they provide symmetry annotations, the CAD models are not perfectly symmetric like the objects from Symsol-I. So, we follow the other approaches, and the evaluation is performed only on a single ground truth pose similar to Symsol-II. We perform an ablation with training data used for SurfEmb. SurfEmb provides better features with PBR+Real data compared to using just PBR data and is reflected in an increase in LL of 0.2. The results indicate the ability to notice the symmetry breaking features to capture the distribution more sharply when they are visible.  

\begin{table}[!htp]\centering
\caption{T-Less Results: We present T-Less results using the log-likelihood measure. P refers to using only PBR data and P+R refers to using both PBR and Real data for training the Surfemb network. Please note that we still use real data to train our distribution network in all cases. L1 and GKL refer to the loss used for training.}\label{tab:tless}
% \scriptsize
\setlength{\tabcolsep}{2pt}
\begin{tabular}{l|cccc|cc c}
Method &Prokudin\cite{prokudin} &Gilitschenski\cite{Gilitschenski} &IPDF\cite{murphy2021implicit} &Spyropose\cite{haugaard2023spyropose} &\multicolumn{3}{c}{Ours} \\\hline
SE data&-&-&-&-&P&\multicolumn{2}{c}{P+R} \\\hline
Loss&-&-&-&-&L1&L1&GKL\\\hline
LL &11.0 &9.1 &12.0 &14.1 &13.6 &13.8 &\textbf{14.53}
% \bottomrule
\end{tabular}
\end{table}

\subsection{ModelNet10-SO3}
ModelNet10-SO3 is an unimodal category-level dataset comprising images of CAD models of 10 categories. We choose a single CAD model per category to train $f_{\mathrm{SE}}, f_{\mathrm{SDF}}$. We employ images from all CADs for training our pipeline, but the distribution supervision comes from the chosen CAD model. The distribution supervision should come ideally from the CAD model of the specific instance in the image, but it is hard to train $f_{\mathrm{SE}}$ for all CADs in the category. We compute the distribution on grid and select the rotation with the highest score for evaluation. We achieve an AR of $70.5\%$ at $30^{\circ}$ rotation error threshold compared to NF($77.4\%$) despite using distribution supervision from a single CAD model per category. Besides, the dataset contains some symmetric models, but only one GT rotation is considered for evaluation leading to lower accuracy.  
\subsection{Ablations}
\paragraph{Positional encoding(PE)} We perform an ablation on different PE of rotation matrices. The PE of elements of rotation matrix leads to noisy distribution as shown in Figure \ref{fig:PE}. This happens when rotation matrices with similar absolute values sometimes get a similar position encoding leading to noise. This could be avoided by parameterizing the encoding of the rotation matrix on manifold by employing Wigner matrices \cite{esteves2021generalized}. We propose a PE where corners of a cube are rotated with rotation matrices and then PE is applied on the transformed 3d coordinates. Both Wigner matrices and our cube position encoding(cube PE) were able to remove noise from the distribution. However, our cube PE performs better compared to Wigner and IPDF PE as shown in Table \ref{tab:Ablations}.
\\

\begin{figure}
     \centering
     \begin{subfigure}{.19\linewidth}
     \centering
     \includegraphics[width=0.5\textwidth]{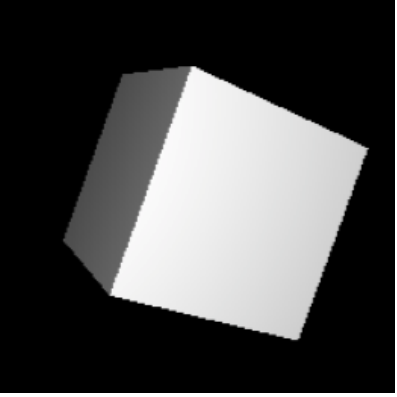}
     \caption{Image\label{fig:failure1}}
     \end{subfigure}%
     \hfill
     \begin{subfigure}{.19\linewidth}
     \centering
     \includegraphics[width=1.\textwidth]{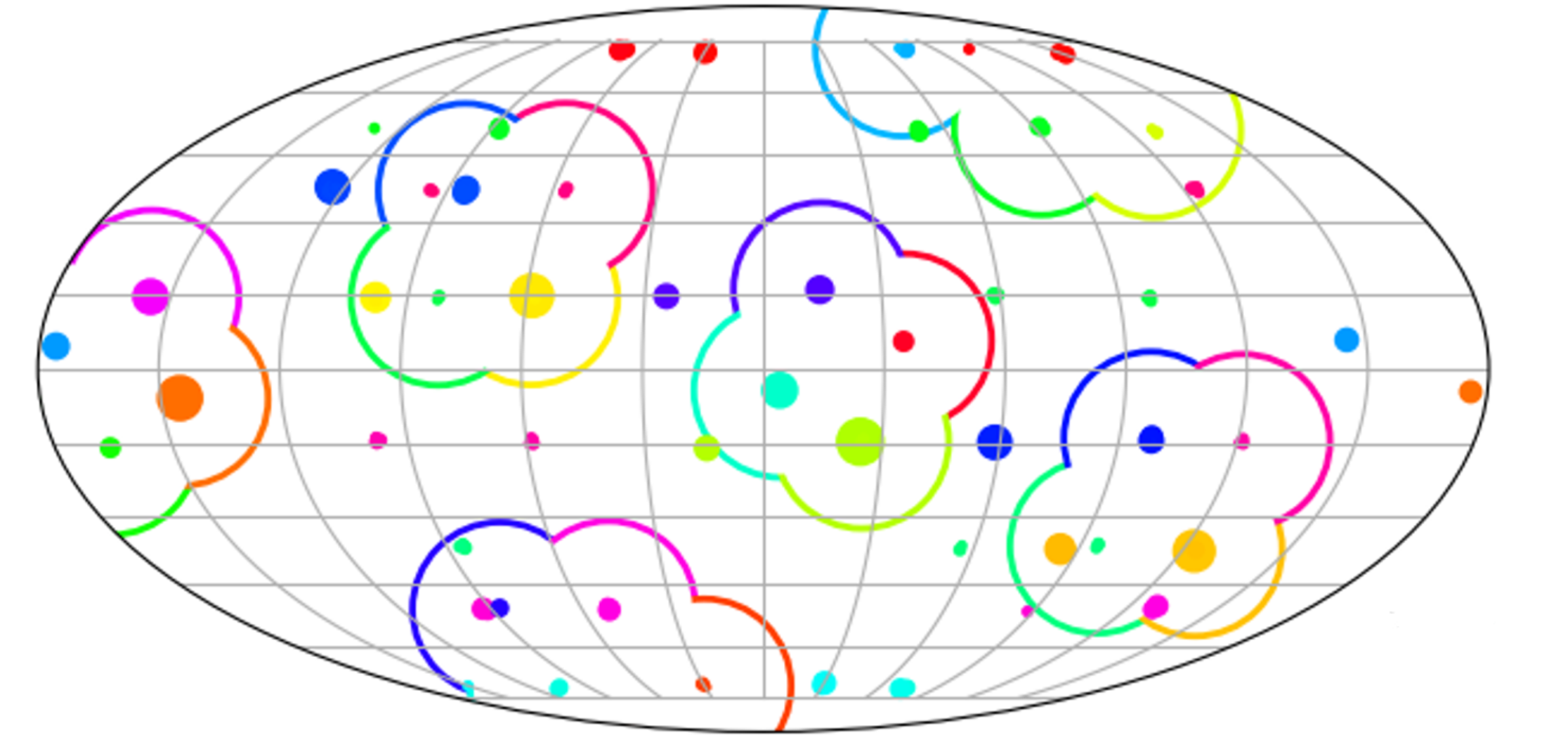}
     \caption{IPDF\label{fig:failure2}}
     \end{subfigure}%
     \hfill
     \begin{subfigure}{.19\linewidth}
     \centering
     \includegraphics[width=1\textwidth]{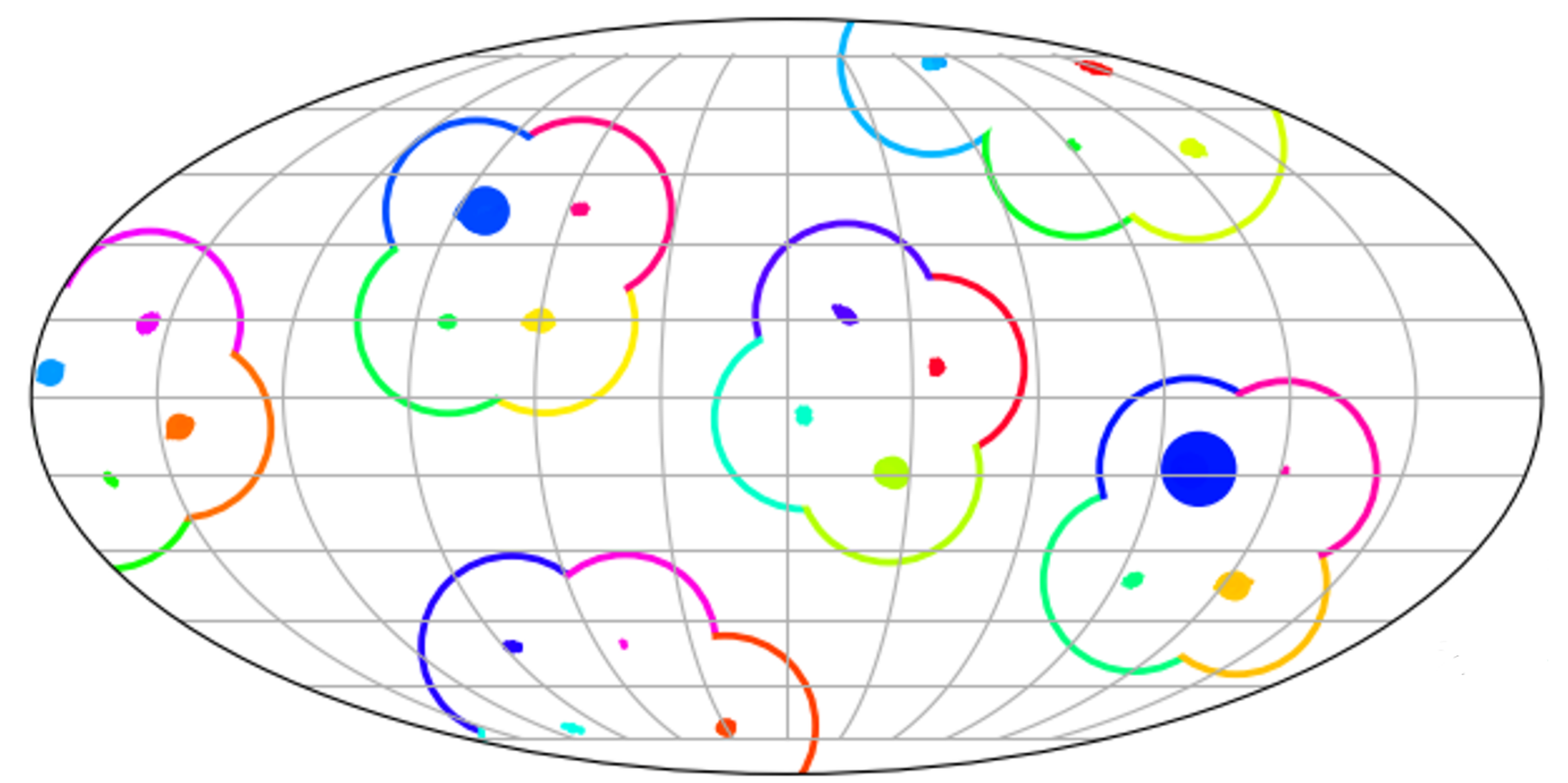}
     \caption{Wigner\label{fig:failure3}}
     \end{subfigure}%
     \hfill
     \begin{subfigure}{.19\linewidth}
     \centering
     \includegraphics[width=1\textwidth]{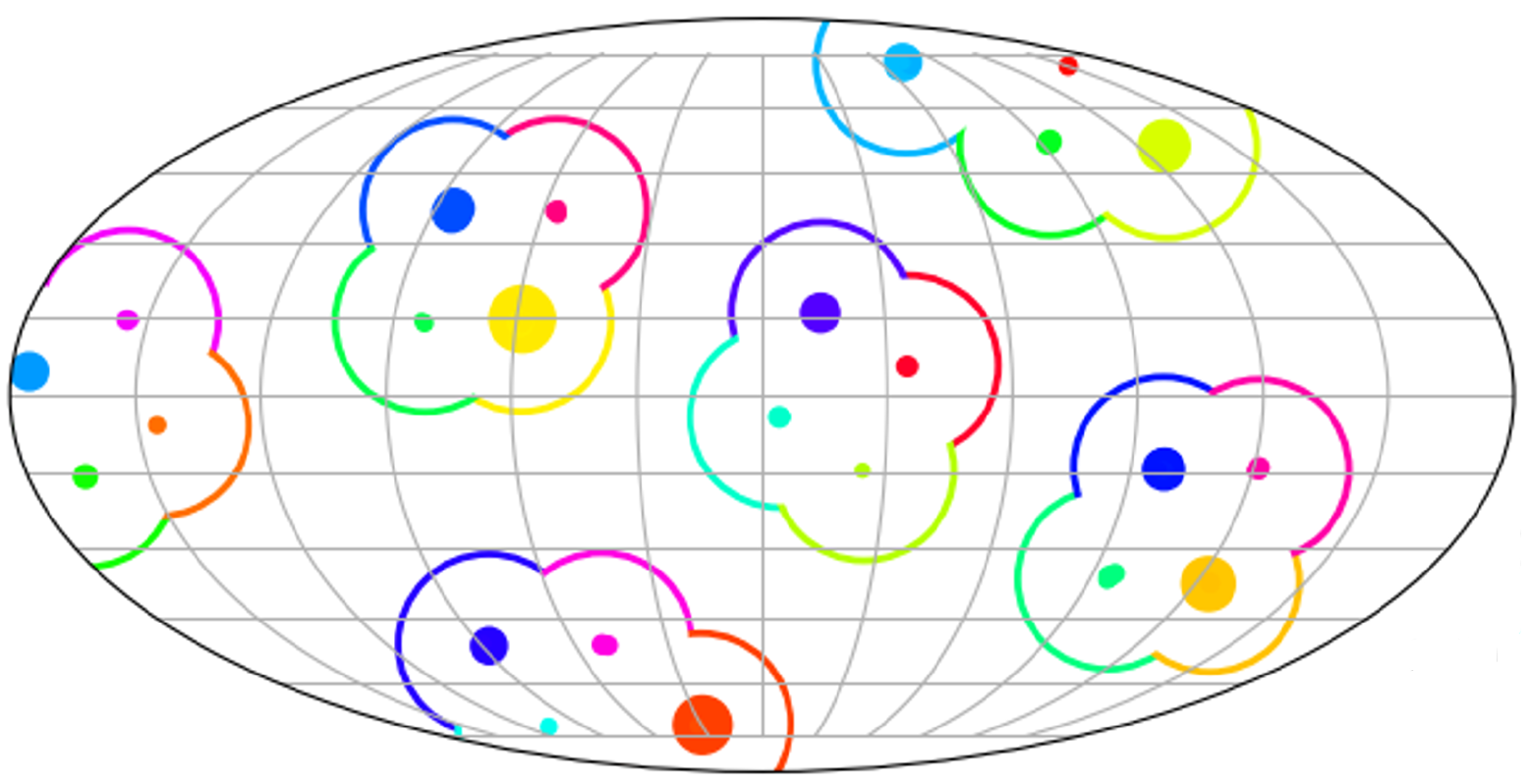}
     \caption{Cube PE\label{fig:failure4}}
     \end{subfigure}
        \begin{subfigure}{0.19\linewidth}
        \centering
         \includegraphics[width=0.5\textwidth]{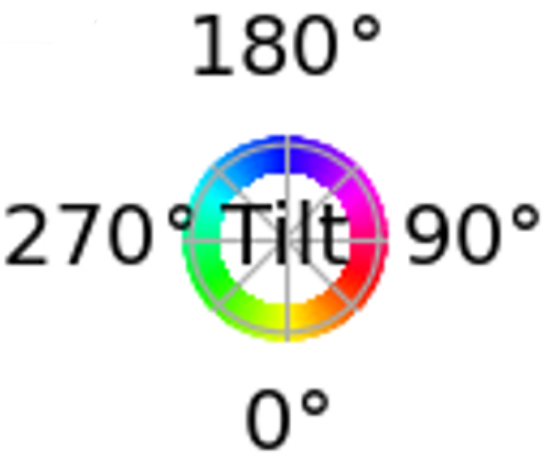}
         \caption{Color Wheel\label{fig:failure5}}
         \end{subfigure}%
     \caption{Pose distribution visualization of our approach with different positional encodings. Our cubePE and wigner matrices based encoding removes noise in the distribution compared to encoding elements of rotation matrices from IPDF. Our encoding removes noise and also has a better performance compared to Wigner matrices. a) Input Image, b)IPDF positional encoding, c) Rotation encoded as Wigner matrices d) Our positional encoding with cube vertices. e) color wheel indicating the tilt about the axis.} \label{fig:PE}
\end{figure} 

\paragraph{Sampling}
During training, we can precompute the distribution for each image through shape and feature based distribution estimation. IPDF employs random sampling that leads to learning a smoother distribution making it difficult to capture modes sharply. As we can convert a single ground truth rotation into the entire distribution, we can sample more rotations near modes and focus on learning the distribution sharply. In Table \ref{tab:Ablations}, random sampling performs worse compared to grid-5 and grid-6 sampling. The grid level refers to HEALPix grid \cite{gorski2005healpix} level at which we estimate our distribution. Sampling on a finer grid leads to improved accuracy. Mode focused sampling is crucial to get sharper modes.

\begin{table}[!t]\centering
\caption{Ablation table: Ablations on SYMSOL-I dataset with positional encoding, sampling strategy, SurfEmb Feature and Shape components, loss terms used for training. Encoding refers to the rotation encoding employed in the approach.  Sampling refers to the strategy employed for sampling rotations during training. Random refers to random sampling. Module refer to module employed to estimate the final distribution. S refers to SDF based module and F refers to the SurfEmb Feature based module. The combined result of SDF and Feat modules provides the best accuracy.} 
\label{tab:Ablations}
%\scriptsize
% \setlength{\tabcolsep}{8pt}
\setlength{\tabcolsep}{6pt}
\begin{tabular}{l|cc|cccccccc}
\toprule
Encoding &IPDF &Wigner & \multicolumn{6}{c}{Cube}  \\\hline
Sampling &grid-5 &grid-5 &random &grid-5 &\multicolumn{4}{c}{grid-6}  \\\hline
Module &F+S &F+S &F+S &F+S &S &F &F+S &F+S \\\hline
Loss &L1 &L1 &L1 &L1 &GKL &GKL &L1 &GKL \\\hline
LL &10.09 &8.59 &6.94 &10.2 &10.32 &10.18 &10.48 &10.648 \\
\bottomrule
\end{tabular}
\end{table}

\paragraph{Geometry and SurfEmb based features}
An ablation to see how Shape and SurfEmb feature based probability affect LL in the Symsol-I. LL from the shape and feature modules achieve decent performance separately but perform better when combined to estimate the final distribution. Feature-based probability is appropriate for capturing shape and texture information. Shape based probability doesn't consider texture and it cannot perform alone on Symsol-II. Feature based probability is essential to capture the distribution better in the presence of texture. On the other hand, learning could be affected by the quality of the features from another learned network(SurfEmb) which is not true for shape based probability as it is based on ground truth shape and more reliable for capturing the distribution sharply. In essence, both of the scoring mechanisms together contribute to the betterment of the approach as shown in Table \ref{tab:Ablations}. 

\paragraph{Loss function}
Generalized KL divergence (GKL) performs better than the L1 loss as shown in Table \ref{tab:Ablations}. This is reflected in T-Less dataset as well from Table \ref{tab:tless} with an increase in LL of 0.6. This indicates that the choice of GKL is well suited for formulating loss between unnormalized probability distributions compared to L1 loss, which fails to be a natural fit for comparing distributions.

\section{Conclusion}
We proposed Alignist, a novel approach to learn pose distributions over images by mapping the conditional estimation from a given image to one over a prototypical CAD model, acquired either as ground truth or via a 3D reconstruction. 
%Our novel probabilistic framework for rotations based on correspondences for a specific viewpoint 
Our probabilistic framework uses a product of experts corresponding to losses over SDF and SurfEmb features, and anchors on full distributional distances computed by a generalized KL divergence, rather than considering individual samples as in normalizing flows. Precomputable distribution with the help of the CAD helps in better sampling leading to sharper distribution.
Our novel positional encoding further reduces noise around modes and allows for learning cleaner distributions. We achieve benchmark accuracy in the Symsol-I and T-Less datasets, especially outperforming the state-of-the-art in low data regimes.  
% Our experiments also reveal that to a certain extent, CAD model can depict the category rather than the instance. Such a model is 

%  pose distribution learning approach by leveraging a CAD model prior and converting the problem into a supervised distribution learning over. We propose a novel probabilistic framework for rotations based on correspondences for a specific viewpoint using a product of experts based on Shape and SurfEmb features. A sampling strategy facilitated by knowing the distribution before training helps in capturing distribution better. We propose a positional encoding alternative to reduce noise and learn better distributions.  We achieve benchmark accuracy in the Symsol-I and T-Less datasets.  

% \paragraph{Acknowledgements} This work is partially funded by Germany’s BMWK under the grant
% UHSS4Lightweight-03LB3070B.
 
\paragraph{Limitations and future work}
Our approach does not explicitly utilize the texture cues either coming from the image or the CAD model. Instead, it exploits the features from SurfEmb which implicitly depend upon both texture and geometry, simultaneously. Employing another expert that models the texture cues explicitly will enable our model to better handle objects like SphereX on Symsol-II where pure geometry cannot contribute towards the distribution. Future work also involves extending our work to diffusion models.

\paragraph{Acknowledgements} T. Birdal acknowledges support from the Engineering and Physical Sciences Research Council [grant EP/X011364/1].

% \input{sec/X_suppl}

%\clearpage  % TODO REVIEW/FINAL: This \clearpage needs to be removed from both review and camera-ready versions.

% ---- Bibliography ----
%
% BibTeX users should specify bibliography style 'splncs04'.
% References will then be sorted and formatted in the correct style.
%

\bibliographystyle{splncs04}
\bibliography{main}

% \clearpage
% \setcounter{page}{1}

\title{Supplementary Material \\ Alignist: CAD-Informed Orientation Distribution Estimation by Fusing Shape and Correspondences}
\author{}
\authorrunning{Vutukur et al.}
\institute{}
\maketitle 

We present training details, and additional quantitative and qualitative results in the following sections.
\section{Training Details}
\subsection{Network Architecture}
We employ a ResNet50 encoder similar to other approaches, IPDF\cite{esteves2021generalized}, SpyroPose\cite{haugaard2023spyropose}, and Normalizing Flow\cite{liu2023delving}. We employ two 4-layer MLPs with ReLU activations with 256 dimensional hidden layers each for predicting SDF probability and feat probability. Cube positional encoding comprises encoding transformed vertices of a cube with a rotation matrix. The positional encoding is applied on eight 3D vertices of the cube with 3 frequencies and concatenated to create a 144-dimensional embedding to represent the rotation matrix. The encoder takes the image and predicts a 2048-dimensional image embedding. We employ two linear layers to convert rotation embedding and image embedding to 256 dimensional embeddings which are then added and passed through two MLPs to predict the probabilities. Symsol dataset doesn't provide CAD models. It was easier to get models for Symsol-I as they are readily available in any 3D library and are canonically aligned with the dataset. For Symsol-II, we employed differentiable rendering initially using silhouettes to estimate camera intrinsics because the shapes were canonically aligned. We employ differentiable rendering to transfer texture from images to the CAD models using the learned intrinsics and the rotation labels. 

\subsection{SurfEmb Network}
We follow the default configuration provided in SurfEmb\cite{haugaard2022surfemb} and train using PBR-rendered data from the CAD models for 100k iterations. We extract the SurfEmb feature MLP which is employed in our pipeline to estimate feature based probability. SurfEmb Feature MLP employs a 4-layer Siren MLP which takes a single 3D point and predicts the corresponding feature vector.

\subsection{SDF Network}
We employ a 4-layer Siren MLP with 256 dimensional hidden layers which takes a single 3D point and predicts the corresponding SDF value. We randomly generate 3D points and estimate SDF values for those 3D points using the CAD model. The 3D points and SDF values are used for training the SDF network which remembers the CAD model implicitly by learning SDF values for 3D points. 

\subsection{Training}
We render a NOCS image for every input image using the CAD model and rotation label. We sample 100 pixels inside the mask to extract a 3D point cloud with 100 points. We rotate this with the inverse of ground truth rotation to get the image-aligned point cloud. We employ 4096 rotations to pair with each image to learn the distribution for that specific image. We sample 3000 rotations from the top 20000 rotations in the precomputed distribution. We sample 1095 rotations randomly from a uniform distribution. We also add the ground truth rotation in the employed rotations. We can precompute the scores for every image before the start of training or we can compute them online. We prefer to compute them online so that we can sample randomly without getting restricted to a grid. For each rotation, we compute both SDF probability and feature probability and use them as supervision for probabilities predicted from the network. We employed a batch size of 128 for the network to train on the Symsol dataset and 64 for the T-Less dataset and ModelNet10-SO3. 

\subsection{Mode Focused Sampling}

For each image, we sample 4K rotations to learn the pose distribution. Since we can precompute the distribution for a given training image, we can sample more rotation matrices near the modes to learn a much sharper distribution. Mode focused sampling helps in learning sharper distribution which is possible because of the precomputable distribution from CAD model through SurfEmb features and SDF based shape networks.

% \section{Positional Encoding with IPDF}
% In the main paper, we presented the impact of cube positional encoding (cube PE) on our approach compared to the positional encoding proposed in IPDF. We evaluate IPDF approach using IPDF positional encoding and our positional encoding to see the impact of the proposed positional encoding on the IPDF approach. We observe that cube PE performs better than IPDF PE by an increase in log-likelihood of 1.01. 

\section{ Additional Quantitative results}
\subsection{ModelNet-SO3}
We perform experiments on ModelNet-SO3 to understand the category level generalization capabilities of our approach. ModelNet-SO3 comprises 10 categories which contain synthetic renderings of around 100 CAD models for each category as the training set. The test set comprises synthetic renderings of unseen instances of the same categories. We choose a single CAD model per category which acts as a 3D representation of the category for training our Shape and Feature experts. During training, we employ distribution supervision from the chosen single CAD model per category. While we employ images from all the provided synthetic renderings, the distribution supervision comes from the chosen CAD model. Ideally, we would employ the same CAD model as the instance present in the image, but it is difficult to train SurfEmb and SDF networks for so many CAD models. Despite using supervision from a single CAD model per category, we achieve closer to the benchmark approaches, NF and IPDF.
We presented the results in \cref{tab:modelnet}, training a separate model per category. 

\begin{table}[!t]\centering

\caption{Evaluations on ModelNet-SO3.}
% \small
% \tabcolsep=0.225cm
\begin{tabular}{l|ccc}
Metric &IPDF &NF &Ours \\
\midrule
$AR@30^{\circ}$&  0.735 & 0.774 & 0.705 \\
% ModelNet10-SO3 &AR@$30^\circ$ &92.50 &\textbf{94.20} &92.10 \\

% \bottomrule
\end{tabular}\label{tab:modelnet}

\end{table}

\subsection{T-Less BOP}
We evaluate IPDF, our approach, and NF on T-less scenes with occlusions by training on synthetic PBR data from the BOP challenge rendered from CAD models. We trained both our approach and NF for 100k iterations. We achieve better log-likelihood compared to NF as shown in in Table \ref{tab:tless-real}. Note that we only trained and evaluated using allocentric rotation representation and ignored translation component. 

\begin{table}[!t]\centering

\caption{Evaluations on occluded real scenes of T-Less BOP using the Log-Likelihood (LL) trained for 100k iterations.}
% \small
% \tabcolsep=0.225cm
\begin{tabular}{l|ccc}
Metric &IPDF &NF &Ours \\
\midrule
LL& 5.31&6.23 &\textbf{6.64} \\
% ModelNet10-SO3 &AR@$30^\circ$ &92.50 &\textbf{94.20} &92.10 \\

% \bottomrule
\end{tabular}\label{tab:tless-real}

\end{table}
\subsection{Spread on SYMSOL-I}
Spread measures the expected angular deviation from all ground truth symmetric annotations which measures the angular deviation and uncertainty around them. We present spread results on SYMSOL-I dataset in Table \ref{tab:spread}. WE observe that we have lower spread compared to NF at 100k while NF performs better than us after 900k iterations.  
\begin{table}[!t]\centering
\caption{Spread on Symsol-I dataset in degrees. Spread measures the expectation of median angular error over all the ground truth rotations. NF-100k, NF-900k refers to Normalizing flow trained for 100k and 900k iterations respectively. 
% \tolga{What is spread? Why is lower the better?}
}\label{tab:spread}
\scriptsize
\begin{tabular}{lrrrrr}

Spread &Deng &IPDF &NF-100k &NF-900k &Ours \\\midrule
cone &10.1 &1.4 &0.9 &0.5 &0.5 \\
cube &40.7 &4.0 &2.0 &0.6 &0.6 \\
cyl &15.2 &1.4 &0.9 &0.5 &0.5 \\
icosa &29.5 &8.4 &9.2 &1.1 &3.5 \\
tet &16.7 &4.6 &1.7 &0.6 &1.0 \\\hline
avg &22.4 &4.0 &2.9 &\textbf{0.7} &1.4 \\

\end{tabular}
\end{table}

\subsection{Performance on SYMSOL-II}
Our LL on SYMSOL-II is impacted by a domain gap as we had to texture the CAD models using differentiable rendering of images. Moreover, LL in SYMSOL-II considers only a single valid mode(the pose the image is rendered) for evaluation even though there are multiple valid modes. This contrasts with SYMSOL-I, where all valid modes are evaluated. Additionally, LL metric inherently favors NF since NF can provide exact LL estimation. In contrast, both IPDF and our approach evaluate LL on a discrete grid of rotations, making our methods limited by finite resolution.
% \subsection{Low-Data plot}
% To show the efficiency of our approach in low-data regime, we plot the Log-Likelihood vs training data plot in Figure \ref{fig:dataplot}. The gap between the Log-Likelihoods attained by our method and those by Normalizing Flow (NF) increases rapidly as the amount of training data decreases. This shows that the presence of CAD model can help learn about symmetries    
% \begin{figure}
% \centering
% \includegraphics[width=0.5\textwidth]{images/surfImages/dataplot.pdf}
% \caption{\small\label{fig:dataplot}Log-Likelihood vs. Training Data for NF and Ours on SYMSOL-I.\vspace{-4mm}
% }
% \end{figure}

\section{Efficiency} Inferring rotation distributions takes 10ms at a grid of 2 million rotations on NVIDIA Titan X. NF takes 210ms for rotation estimation, but longer to construct the distribution as the full base distribution is mapped onto the target.

\section{Qualitative results}
We present more distribution visualization results for all objects in the Symsol dataset. We follow the IPDF approach to visualize the top-k probabilities on a 2D sphere where each point on the sphere represents an axis of rotation and the color of the point indicates the tilt about that axis. We visualize distribution on Symsol-I and Symsol-II as described in IPDF in Figures \ref{fig:SymsolViz1} and \ref{fig:SupSymsol2Viz} respectively. We also visualize the SurfEmb features of tetX object in \ref{fig:featViz}. We also visualize distribution by rotating coordinate reference frames with top-k rotations and projecting them on the image as shown in Figure \ref{fig:ourViz}. 
\newpage
\begin{figure}[t]
     \centering
     \begin{subfigure}{.3\linewidth}
     \centering
     \includegraphics[width=1\textwidth]{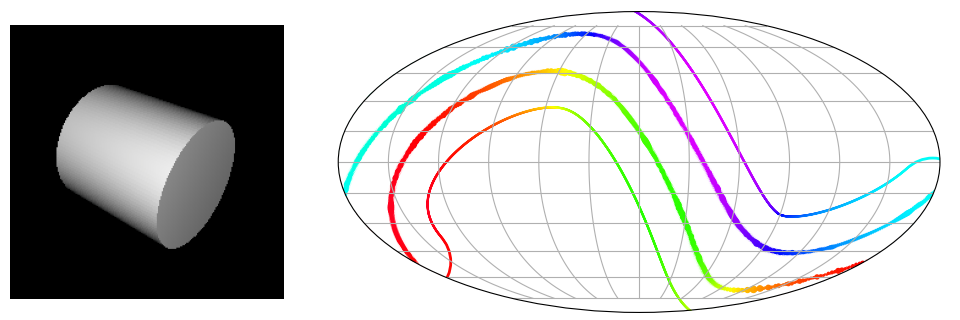}
     % \caption{Cyl-Symsol-I\label{fig:dcyl1}}
     \end{subfigure}%
     \hfill
     \begin{subfigure}{.3\linewidth}
     \centering
     \includegraphics[width=1.\textwidth]{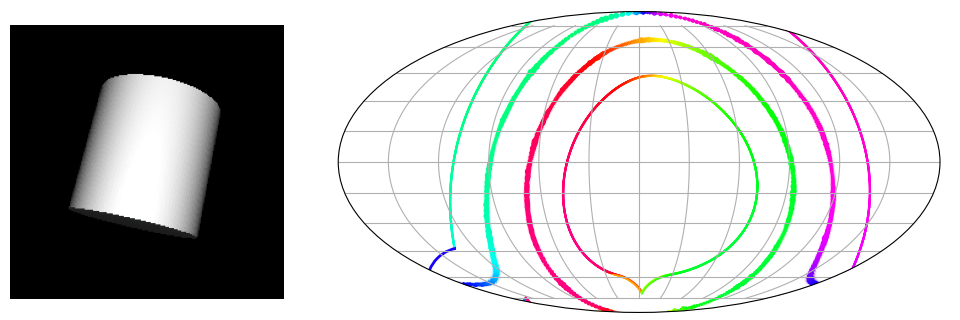}
     % \caption{Cyl-Symsol-I\label{fig:dcyl2}}
     \end{subfigure}%
     \hfill
     \begin{subfigure}{.3\linewidth}
     \centering
     \includegraphics[width=1\textwidth]{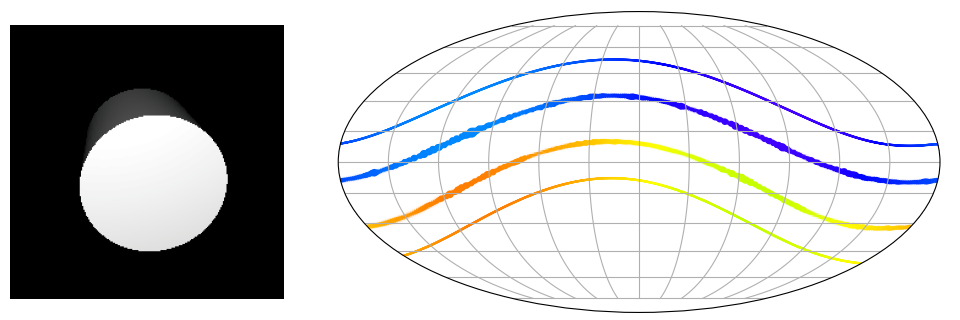}
     % \caption{Cyl-Symsol-I\label{fig:dcyl3}}
     \end{subfigure}%
     \hfill
     \\
     \begin{subfigure}{.3\linewidth}
     \centering
     \includegraphics[width=1\textwidth]{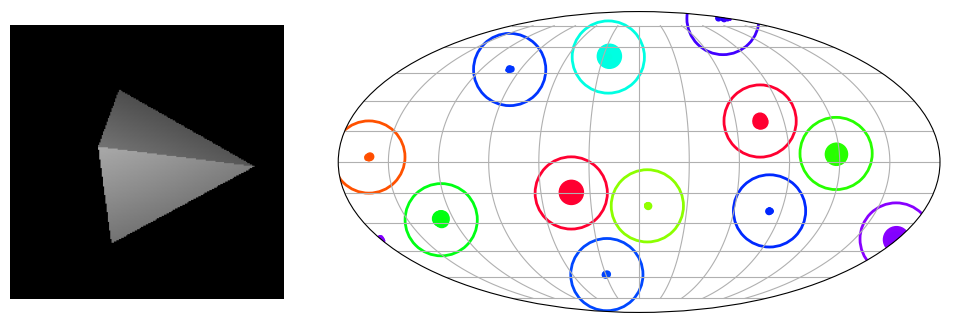}
     % \caption{Tet-Symsol-I\label{fig:tetS1}}
     \end{subfigure}%
     \hfill
     \begin{subfigure}{.3\linewidth}
     \centering
     \includegraphics[width=1.\textwidth]{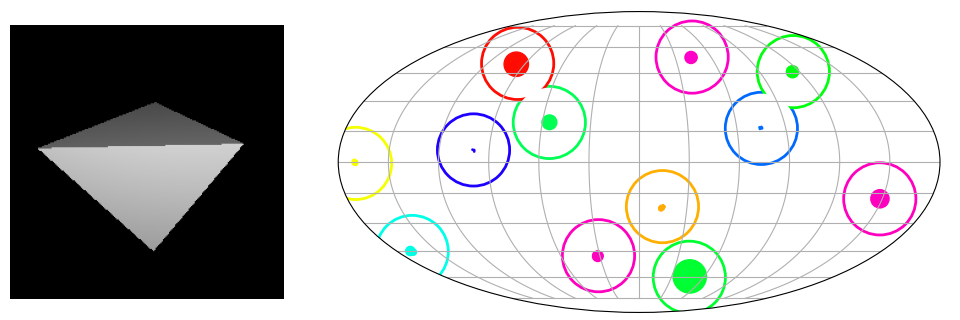}
     % \caption{Tet-Symsol-I\label{fig:tetS2}}
     \end{subfigure}%
     \hfill
     \begin{subfigure}{.3\linewidth}
     \centering
     \includegraphics[width=1\textwidth]{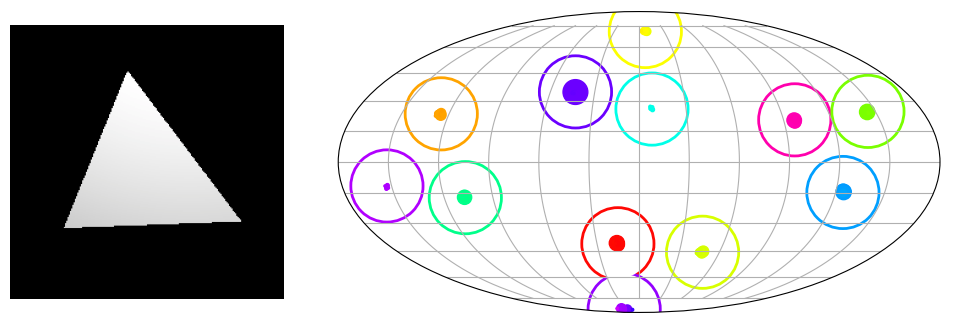}
     % \caption{Tet-Symsol-I\label{fig:tet2S2}}
     \end{subfigure}%
     \hfill
     \\
     \begin{subfigure}{.3\linewidth}
     \centering
     \includegraphics[width=1\textwidth]{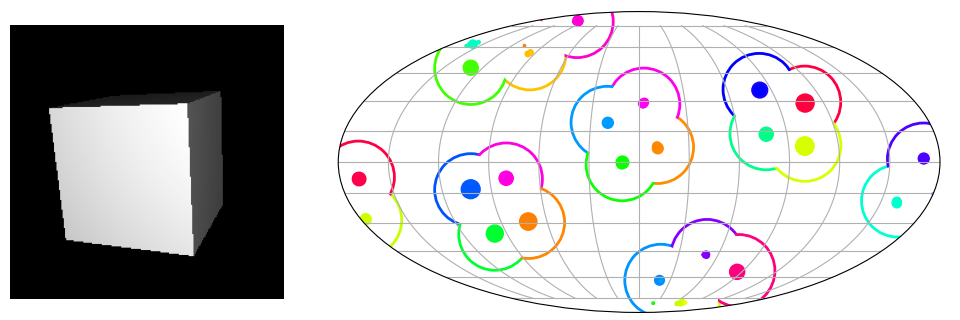}
     % \caption{Cube-Symsol-I\label{fig:tetS1}}
     \end{subfigure}%
     \hfill
     \begin{subfigure}{.3\linewidth}
     \centering
     \includegraphics[width=1.\textwidth]{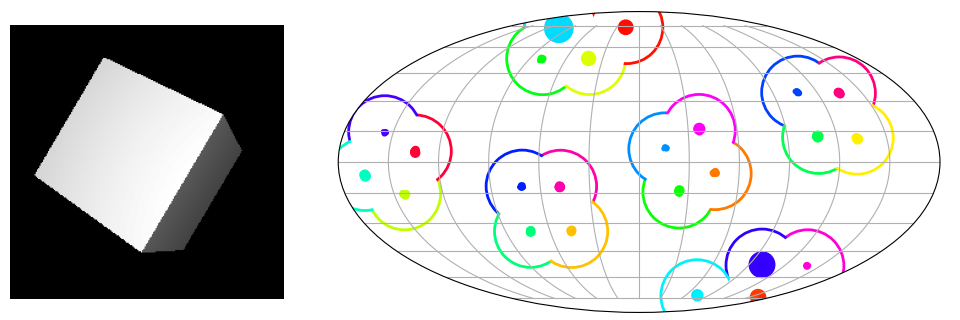}
     % \caption{Cube-Symsol-I\label{fig:tetS2}}
     \end{subfigure}%
     \hfill
     \begin{subfigure}{.3\linewidth}
     \centering
     \includegraphics[width=1\textwidth]{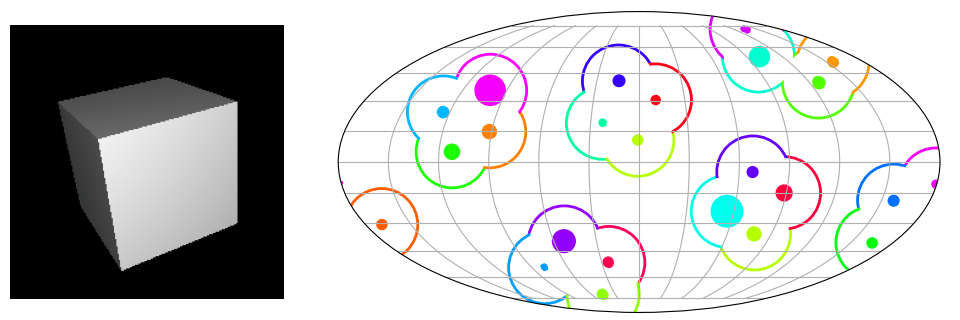}
     % \caption{Cube-Symsol-I\label{fig:tet2S2}}
     \end{subfigure}%
     \hfill
      \\
     \begin{subfigure}{.3\linewidth}
     \centering
     \includegraphics[width=1\textwidth]{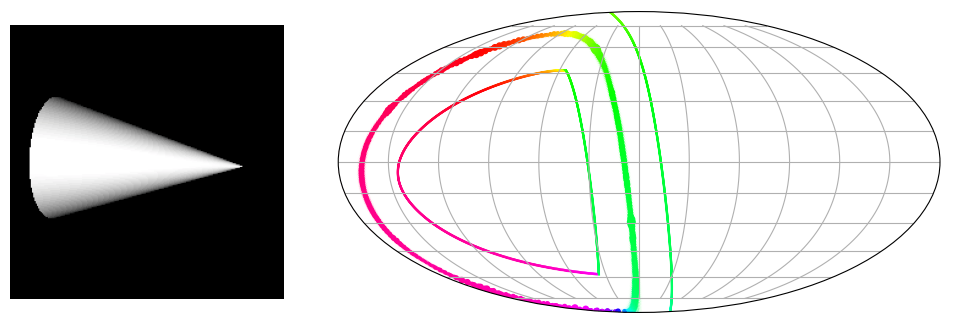}
     % \caption{Cone-Symsol-I\label{fig:tetS1}}
     \end{subfigure}%
     \hfill
     \begin{subfigure}{.3\linewidth}
     \centering
     \includegraphics[width=1.\textwidth]{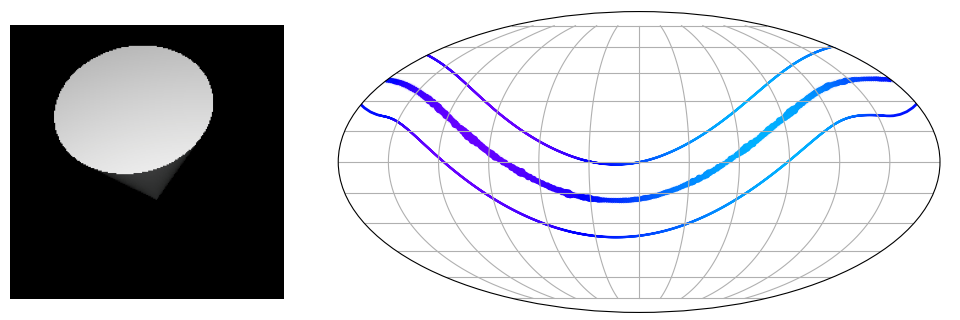}
     % \caption{Cone-Symsol-I\label{fig:tetS2}}
     \end{subfigure}%
     \hfill
     \begin{subfigure}{.3\linewidth}
     \centering
     \includegraphics[width=1\textwidth]{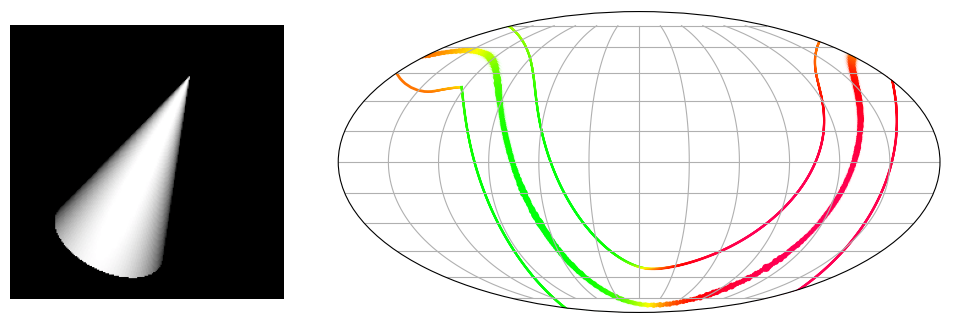}
     % \caption{Cone-Symsol-I\label{fig:tet2S2}}
     \end{subfigure}%
     \hfill
     %  \\
     % \begin{subfigure}{.3\linewidth}
     % \centering
     % \includegraphics[width=1\textwidth]{images/distImages/tet_00126.png}
     % % \caption{Tet-Symsol-I\label{fig:tetS1}}
     % \end{subfigure}%
     % \hfill
     % \begin{subfigure}{.3\linewidth}
     % \centering
     % \includegraphics[width=1.\textwidth]{images/distImages/tet_00736.png}
     % % \caption{Tet-Symsol-I\label{fig:tetS2}}
     % \end{subfigure}%
     % \hfill
     % \begin{subfigure}{.3\linewidth}
     % \centering
     % \includegraphics[width=1\textwidth]{images/distImages/tet_01048.png}
     % % \caption{Tet-Symsol-I\label{fig:tet2S2}}
     % \end{subfigure}%
     % \hfill
     %
         % \includegraphics[width=0.19\textwidth]{sources/images/alignedbadly2.PNG}
      
     \caption{Pose distribution visualization for different objects in Symsol-I. Each row corresponds to a single object from Symsol. The distributions for cylinder, tetrahedron, cube, and cone objects are visualized. Cylinder and cone express continuous symmetries indicated by smooth curves, unlike tetrahedron and cube which have discrete modes.        
     \vspace{-4mm}
     }
     \label{fig:SymsolViz1}
\end{figure}

\vspace{2cm}

\begin{figure}[h]
     \centering
     \begin{subfigure}{.3\linewidth}
     \centering
     \includegraphics[width=1\textwidth]{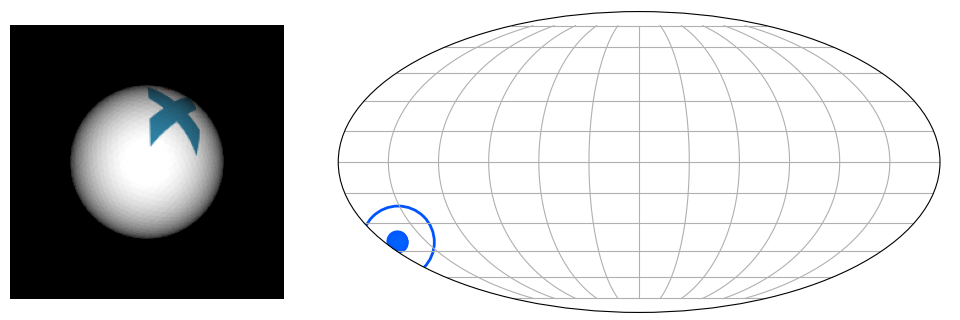}
     % \caption{Cyl-Symsol-I\label{fig:dcyl1}}
     \end{subfigure}%
     \hfill
     \begin{subfigure}{.3\linewidth}
     \centering
     \includegraphics[width=1.\textwidth]{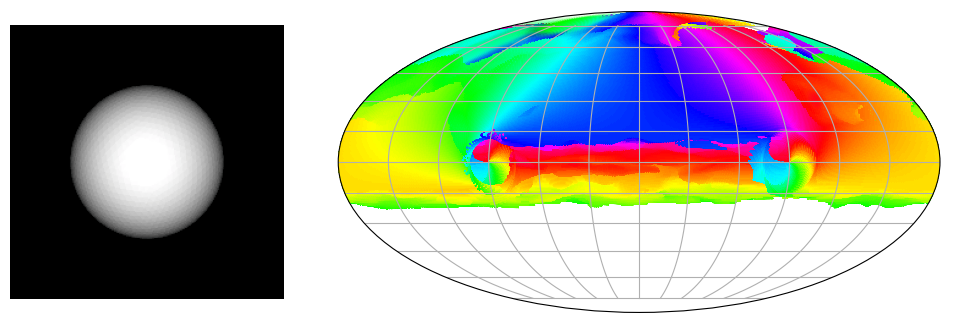}
     % \caption{Cyl-Symsol-I\label{fig:dcyl2}}
     \end{subfigure}%
     \hfill
     \begin{subfigure}{.3\linewidth}
     \centering
     \includegraphics[width=1\textwidth]{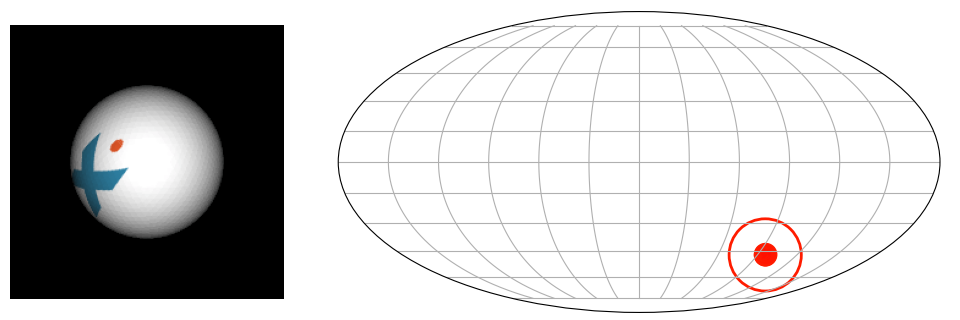}
     % \caption{Cyl-Symsol-I\label{fig:dcyl3}}
     \end{subfigure}%
     \hfill
     \\
     \begin{subfigure}{.3\linewidth}
     \centering
     \includegraphics[width=1\textwidth]{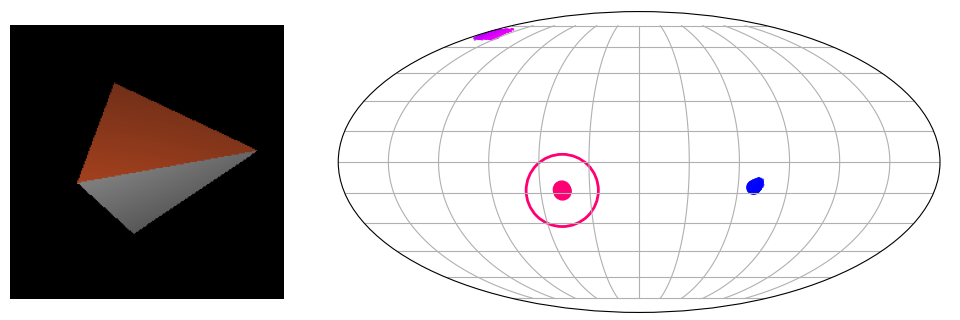}
     % \caption{Tet-Symsol-I\label{fig:tetS1}}
     \end{subfigure}%
     \hfill
     \begin{subfigure}{.3\linewidth}
     \centering
     \includegraphics[width=1.\textwidth]{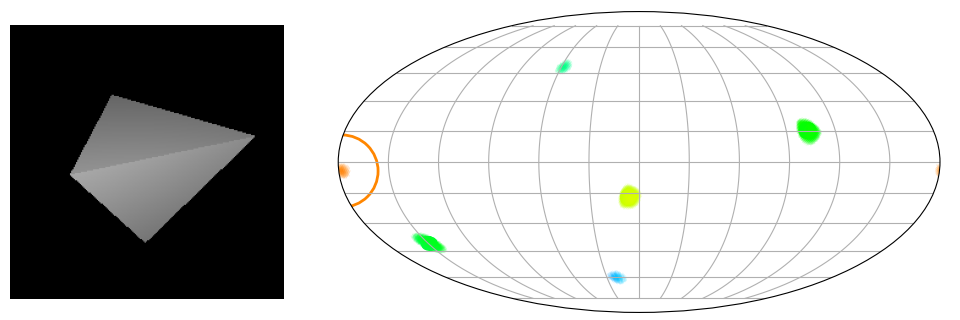}
     % \caption{Tet-Symsol-I\label{fig:tetS2}}
     \end{subfigure}%
     \hfill
     \begin{subfigure}{.3\linewidth}
     \centering
     \includegraphics[width=1\textwidth]{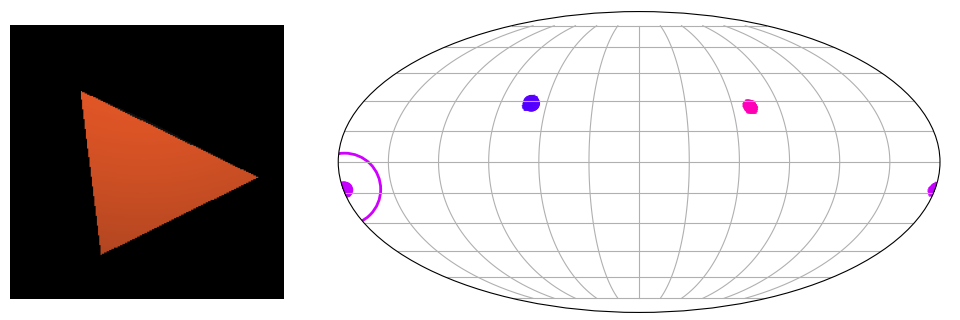}
     % \caption{Tet-Symsol-I\label{fig:tet2S2}}
     \end{subfigure}%
     \hfill
     \\
     \begin{subfigure}{.3\linewidth}
     \centering
     \includegraphics[width=1\textwidth]{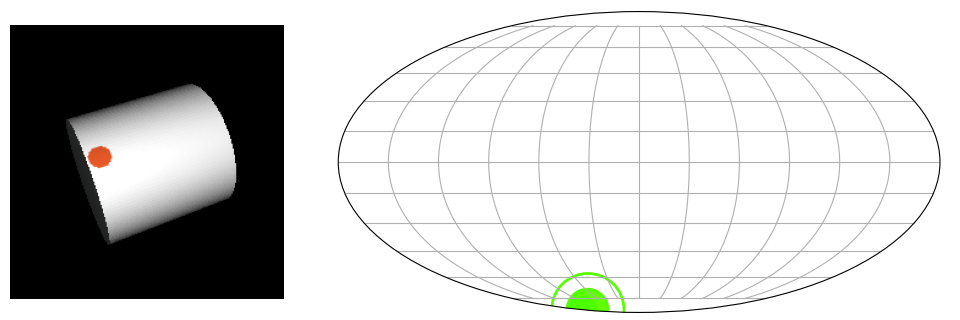}
     % \caption{Cube-Symsol-I\label{fig:tetS1}}
     \end{subfigure}%
     \hfill
     \begin{subfigure}{.3\linewidth}
     \centering
     \includegraphics[width=1.\textwidth]{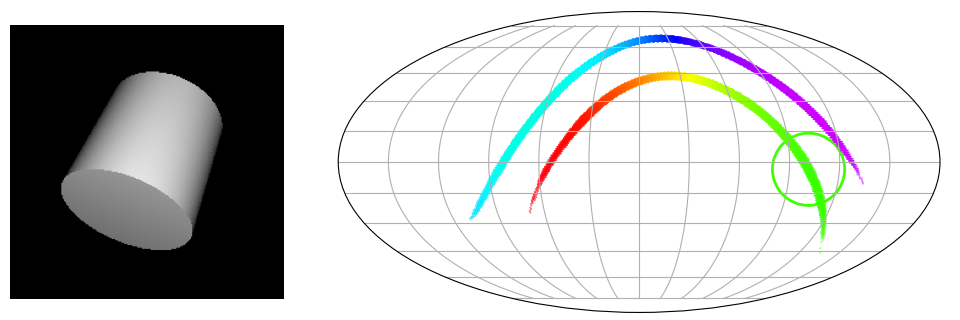}
     % \caption{Cube-Symsol-I\label{fig:tetS2}}
     \end{subfigure}%
     \hfill
     \begin{subfigure}{.3\linewidth}
     \centering
     \includegraphics[width=1\textwidth]{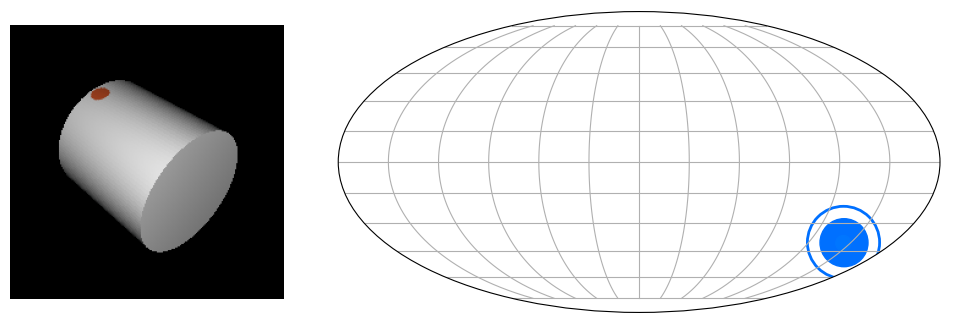}
     % \caption{Cube-Symsol-I\label{fig:tet2S2}}
     \end{subfigure}%
     \hfill
     \caption{Pose distribution visualization for different objects in Symsol-II. Each row corresponds to images from SphereX, TetX and CylO objects respectively. The middle column indicates the distribution when the markers are not visible. The left and right columns indicate the sharper distribution when the markers are visible. This shows that our approach can capture the distribution based on the texture component and make it sharper based on the marker when it is visible. This is possible because of the surfemb features which learn different features for textured regions and different features for untextured regions as shown in Figure \ref{fig:featViz}.
     \label{fig:SupSymsol2Viz}}
\end{figure}

\begin{figure}
     \centering
     \begin{subfigure}{.5\linewidth}
     \centering
     \includegraphics[width=1\textwidth]{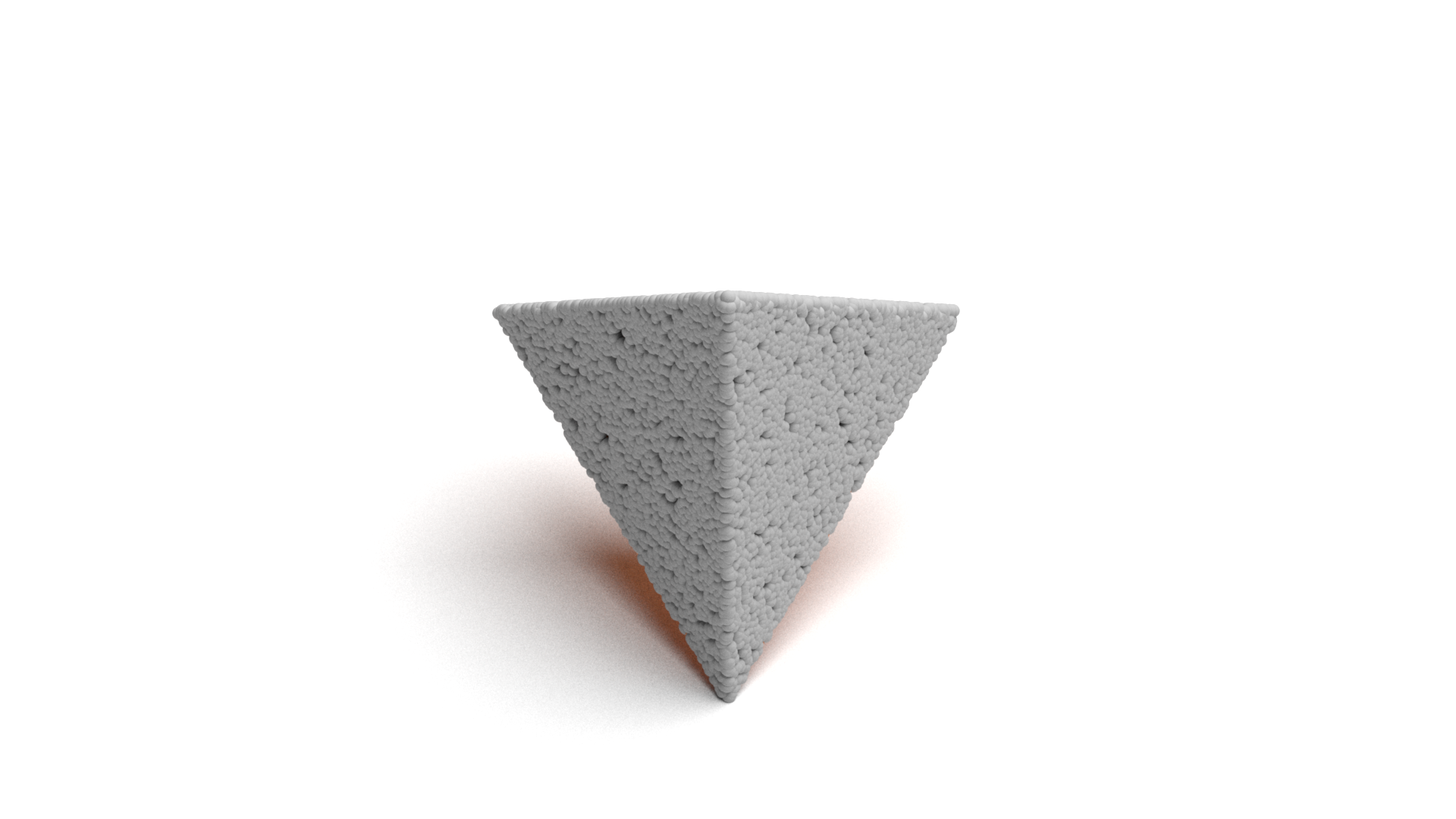}
     % \caption{Cyl-Symsol-I\label{fig:dcyl1}}
     \end{subfigure}%
     \hfill
     \begin{subfigure}{.5\linewidth}
     \centering
     \includegraphics[width=1.\textwidth]{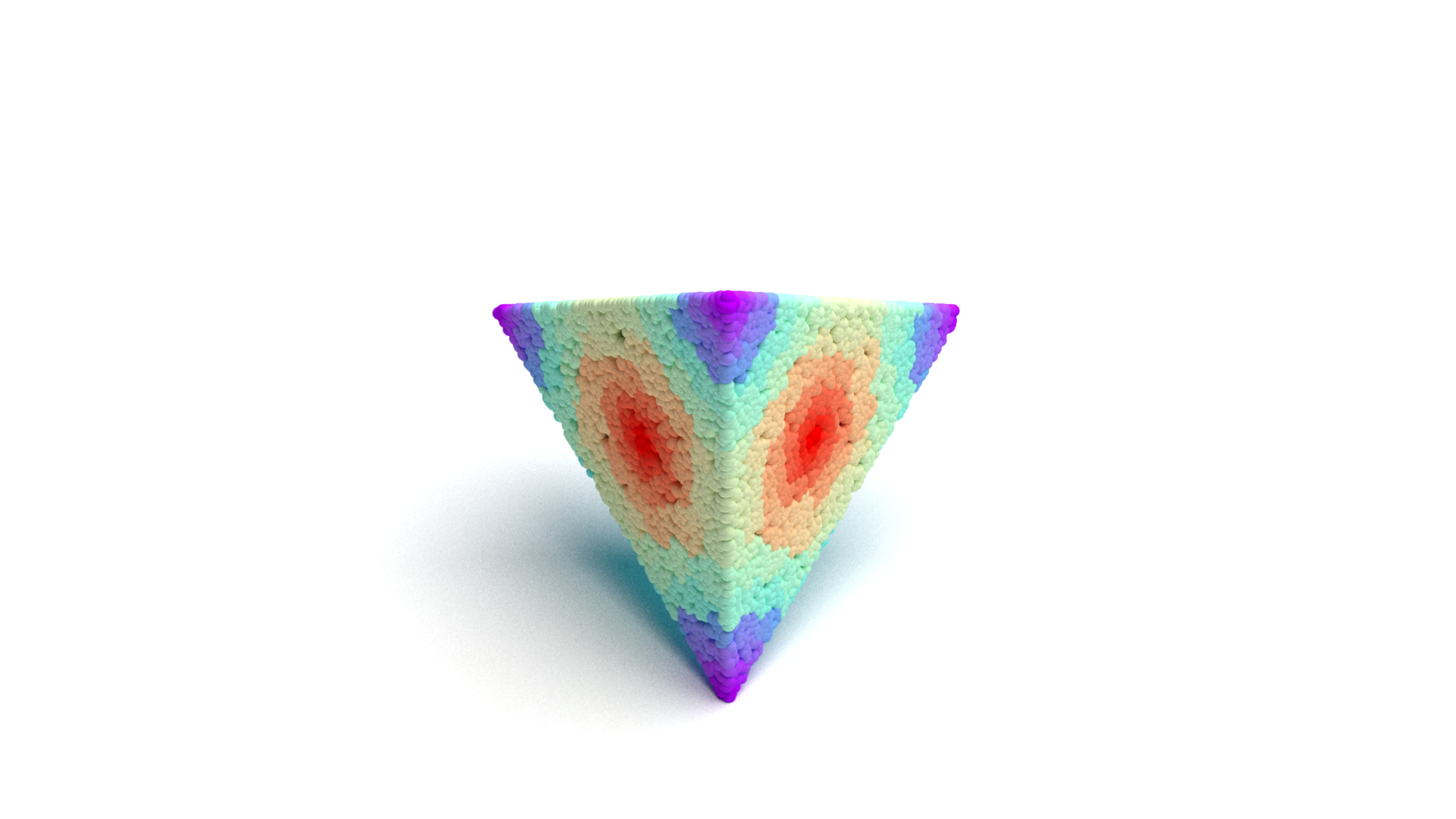}
     % \caption{Cyl-Symsol-I\label{fig:dcyl2}}
     \end{subfigure}%
     \hfill
     \\
     \begin{subfigure}{.5\linewidth}
     \centering
     \includegraphics[width=1\textwidth]{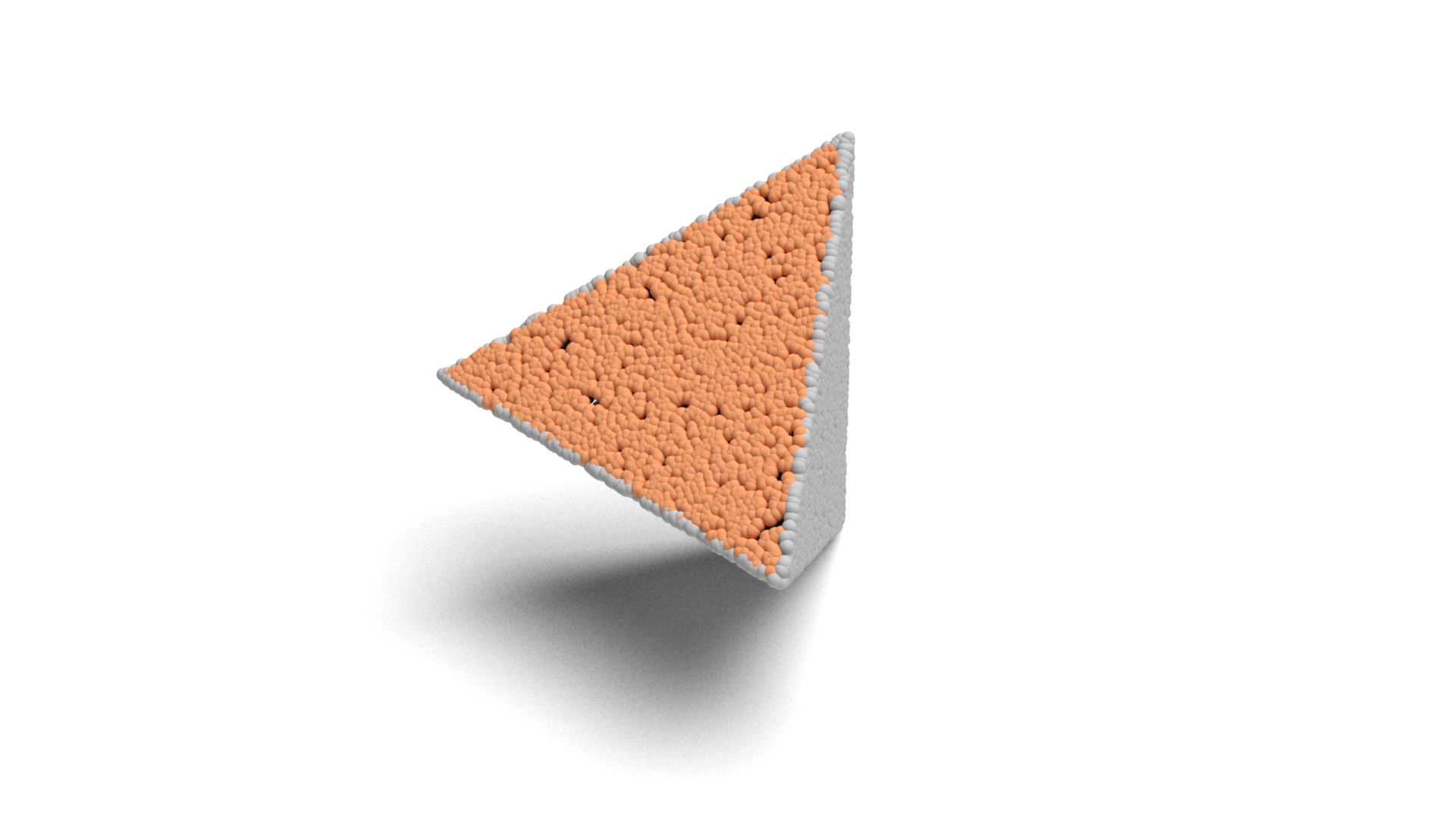}
     % \caption{Cyl-Symsol-I\label{fig:dcyl1}}
     \end{subfigure}%
     \hfill
     \begin{subfigure}{.5\linewidth}
     \centering
     \includegraphics[width=1.\textwidth]{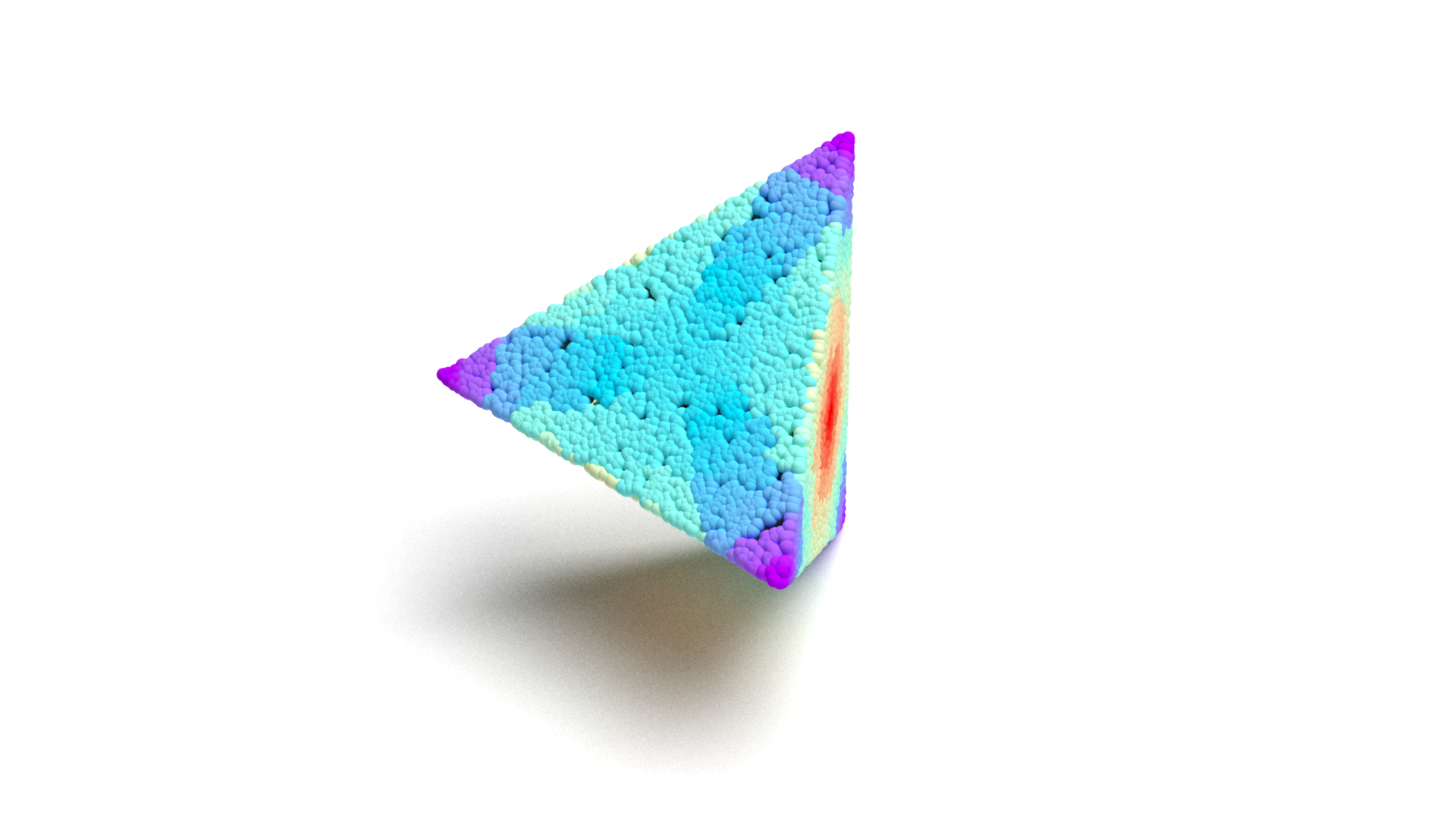}
     % \caption{Cyl-Symsol-I\label{fig:dcyl2}}
     \end{subfigure}%
     \hfill
     \hfill
\\
     \begin{subfigure}{.5\linewidth}
     \centering
     \includegraphics[width=1\textwidth]{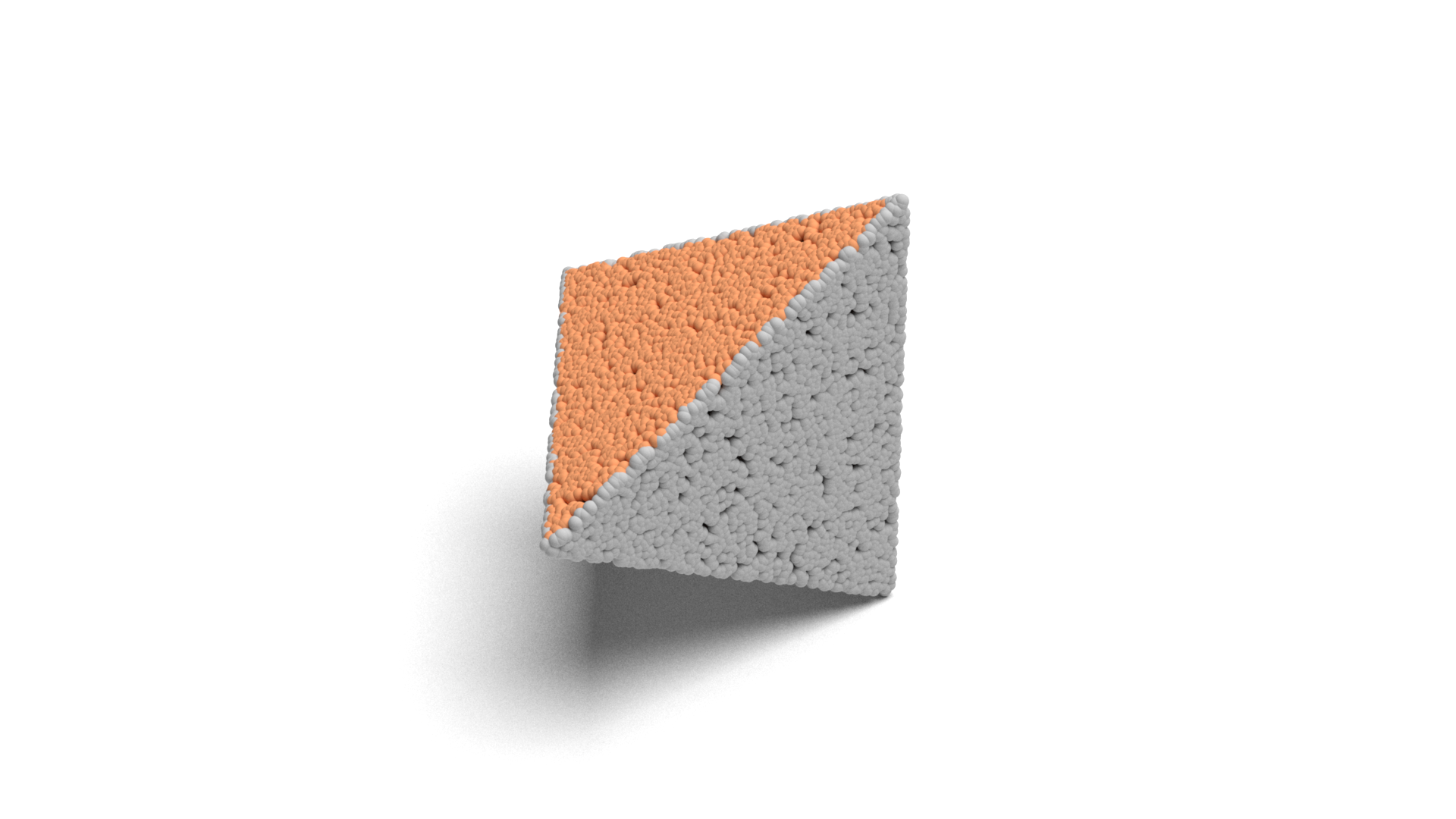}
     % \caption{Cyl-Symsol-I\label{fig:dcyl1}}
     \end{subfigure}%
     \hfill
     \begin{subfigure}{.5\linewidth}
     \centering
     \includegraphics[width=1.\textwidth]{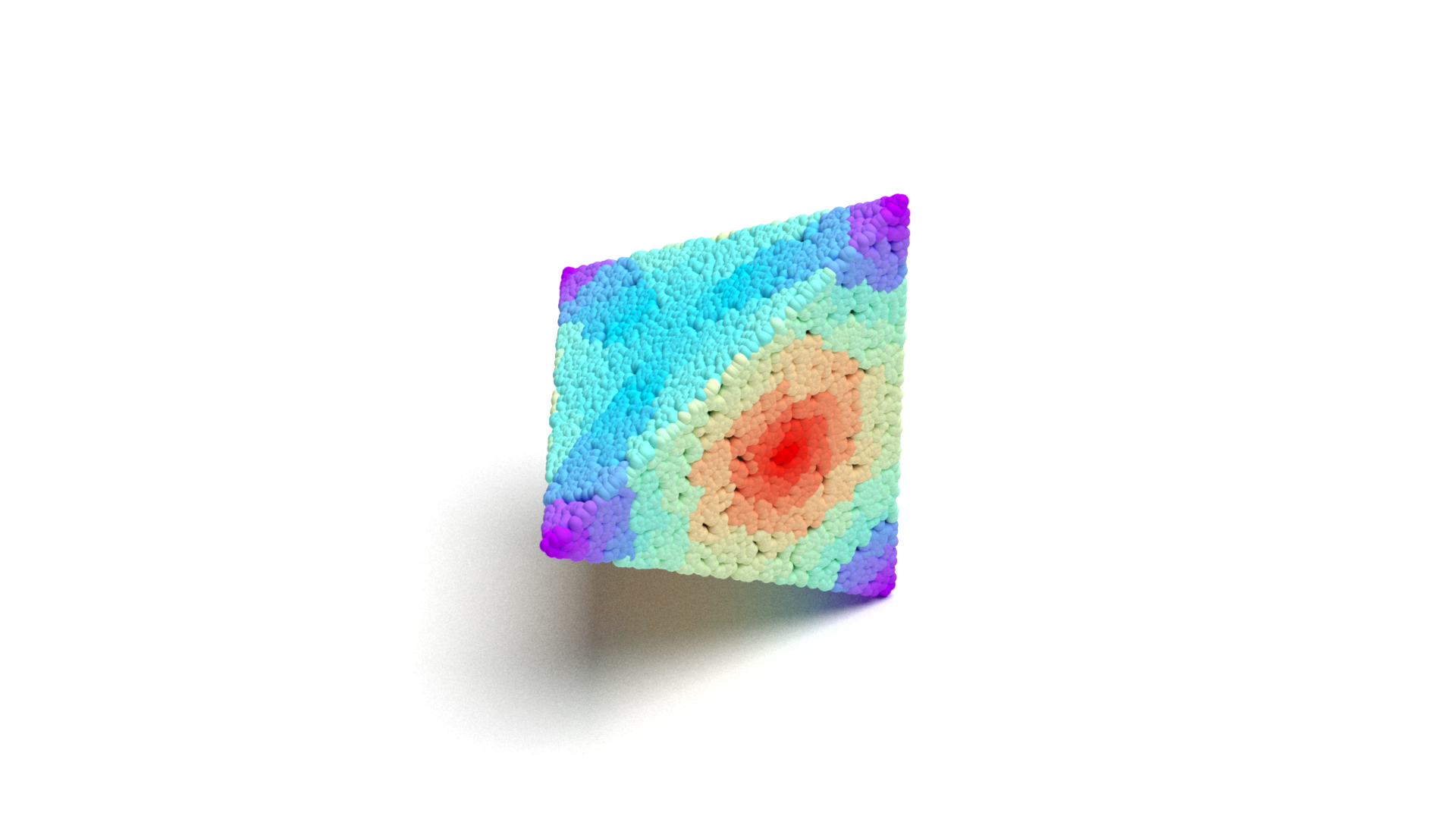}
     % \caption{Cyl-Symsol-I\label{fig:dcyl2}}
     \end{subfigure}%
     \hfill\\
     \begin{subfigure}{.5\linewidth}
     \centering
     \includegraphics[width=1\textwidth]{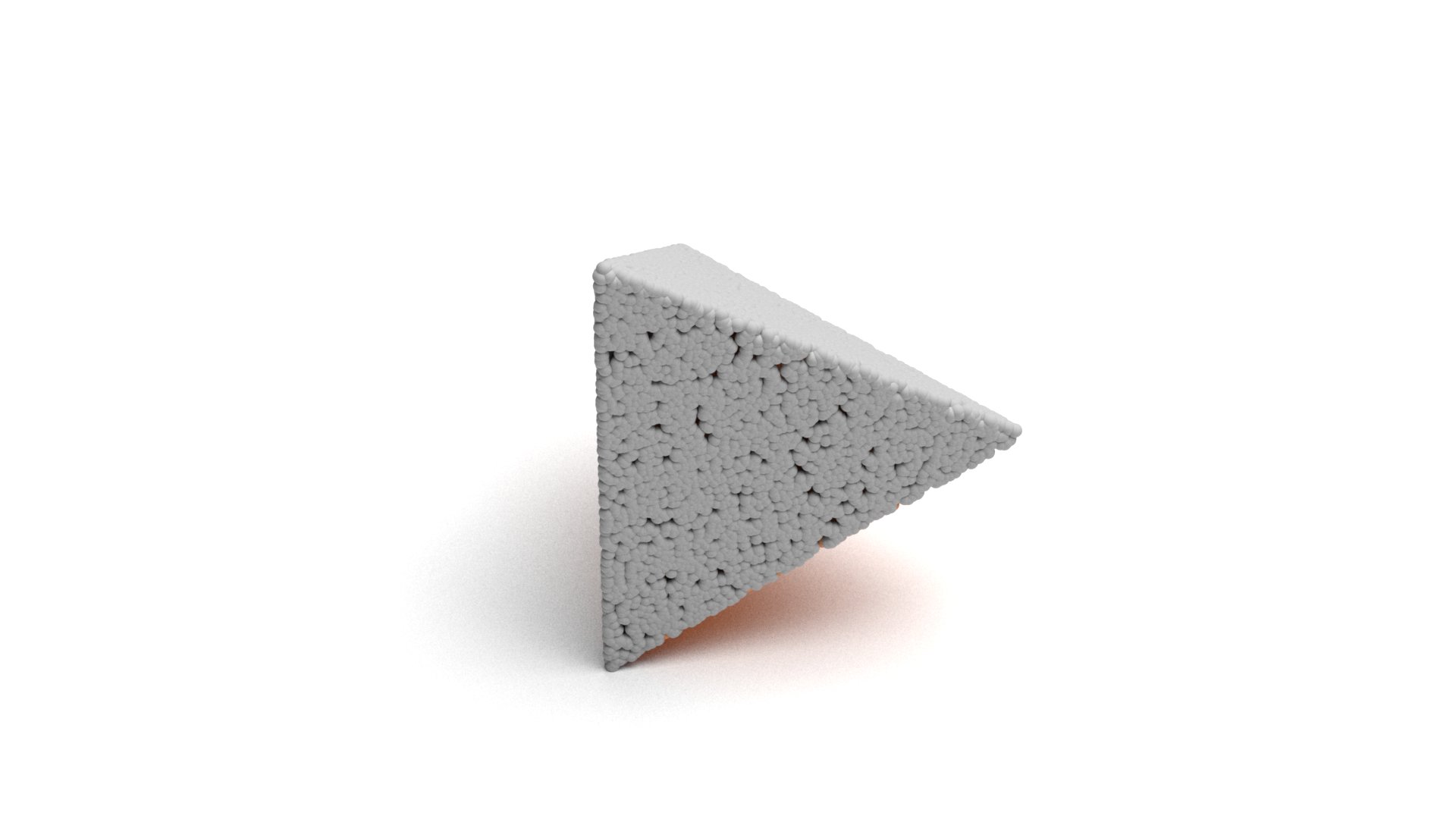}
     % \caption{Cyl-Symsol-I\label{fig:dcyl1}}
     \end{subfigure}%
     \hfill
     \begin{subfigure}{.5\linewidth}
     \centering
     \includegraphics[width=1.\textwidth]{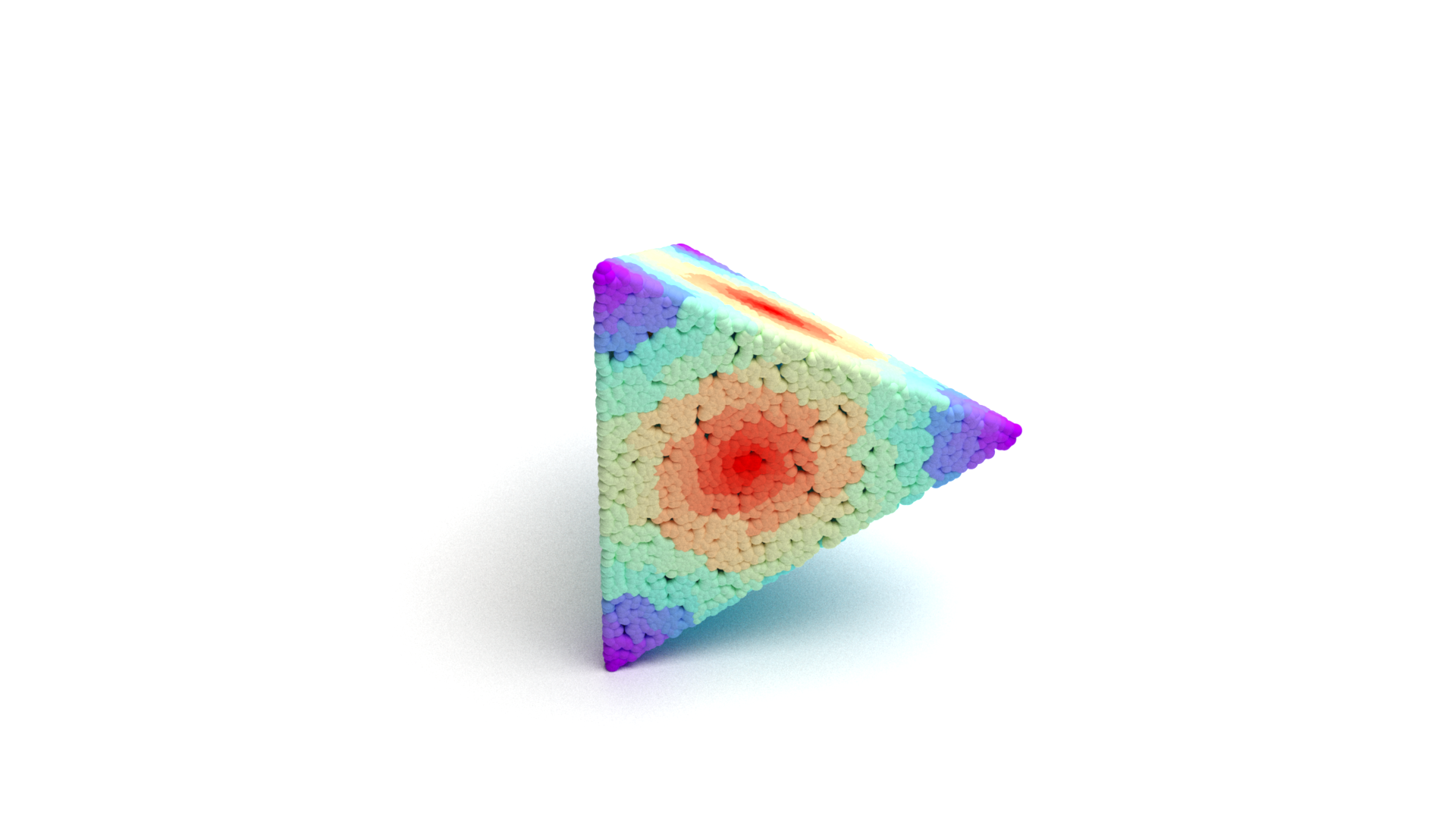}
     % \caption{Cyl-Symsol-I\label{fig:dcyl2}}
     \end{subfigure}%
     \hfill
     \caption{Feature visualization of SurfEmb features for tetX object where a face of tetrahedron is textured with orange color. The first column shows the point cloud from the textured CAD model. The second column shows the per-point feature visualization of the point cloud from different viewpoints. Note that the features are computed in canonical orientation. The visualization indicates different rotations to illustrate the difference in features on the orange face and non-textured faces. The features are clearly different on the non-textured sides compared to the textured side. Also, all of the non-textured sides have similar features.    
     \label{fig:featViz}}
\end{figure}
\begin{figure}
     \centering
     \begin{subfigure}{.3\linewidth}
     \centering
     \includegraphics[width=0.8\textwidth]{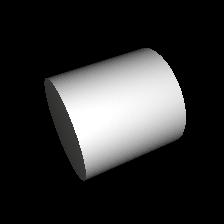}
     % \caption{Cyl-Symsol-I\label{fig:dcyl1}}
     \end{subfigure}%
     \hfill
     \begin{subfigure}{.3\linewidth}
     \centering
     \includegraphics[width=.8\textwidth]{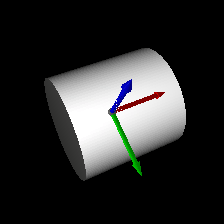}
     % \caption{Cyl-Symsol-I\label{fig:dcyl2}}
     \end{subfigure}%
     \hfill
     \begin{subfigure}{.3\linewidth}
     \centering
     \includegraphics[width=0.8\textwidth]{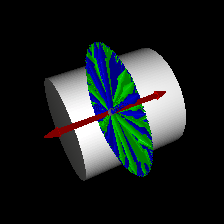}
     % \caption{Cyl-Symsol-I\label{fig:dcyl3}}
     \end{subfigure}%
     \hfill
     \\
     \begin{subfigure}{.3\linewidth}
     \centering
     \includegraphics[width=.8\textwidth]{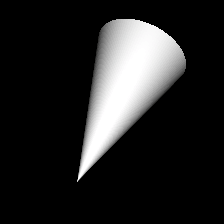}
     % \caption{Tet-Symsol-I\label{fig:tetS1}}
     \end{subfigure}%
     \hfill
     \begin{subfigure}{.3\linewidth}
     \centering
     \includegraphics[width=0.8\textwidth]{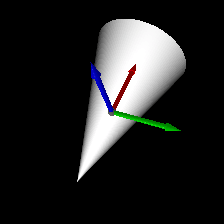}
     % \caption{Tet-Symsol-I\label{fig:tetS2}}
     \end{subfigure}%
     \hfill
     \begin{subfigure}{.3\linewidth}
     \centering
     \includegraphics[width=0.8\textwidth]{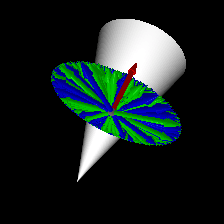}
     % \caption{Tet-Symsol-I\label{fig:tet2S2}}
     \end{subfigure}%
     \hfill
     \\
     \begin{subfigure}{.3\linewidth}
     \centering
     \includegraphics[width=0.8\textwidth]{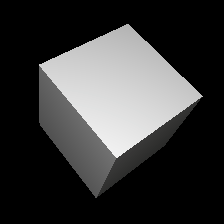}
     % \caption{Cube-Symsol-I\label{fig:tetS1}}
     \end{subfigure}%
     \hfill
     \begin{subfigure}{.3\linewidth}
     \centering
     \includegraphics[width=.8\textwidth]{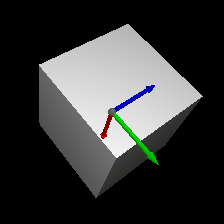}
     % \caption{Cube-Symsol-I\label{fig:tetS2}}
     \end{subfigure}%
     \hfill
     \begin{subfigure}{.3\linewidth}
     \centering
     \includegraphics[width=.8\textwidth]{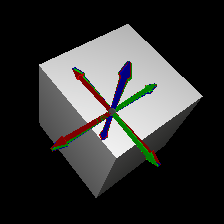}
     % \caption{Cube-Symsol-I\label{fig:tet2S2}}
     \end{subfigure}%
     \hfill
      \\
     \begin{subfigure}{.3\linewidth}
     \centering
     \includegraphics[width=.8\textwidth]{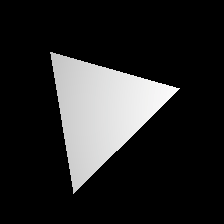}
     % \caption{Cone-Symsol-I\label{fig:tetS1}}
     \end{subfigure}%
     \hfill
     \begin{subfigure}{.3\linewidth}
     \centering
     \includegraphics[width=.8\textwidth]{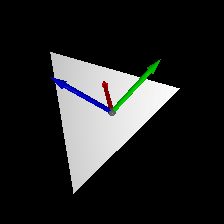}
     % \caption{Cone-Symsol-I\label{fig:tetS2}}
     \end{subfigure}%
     \hfill
     \begin{subfigure}{.3\linewidth}
     \centering
     \includegraphics[width=.8\textwidth]{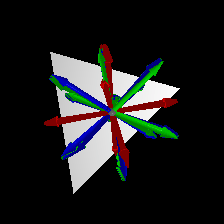}
     % \caption{Cone-Symsol-I\label{fig:tet2S2}}
     \end{subfigure}%
     \hfill
      \\
     \begin{subfigure}{.3\linewidth}
     \centering
     \includegraphics[width=.8\textwidth]{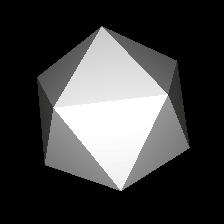}
     % \caption{Cone-Symsol-I\label{fig:tetS1}}
     \end{subfigure}%
     \hfill
     \begin{subfigure}{.3\linewidth}
     \centering
     \includegraphics[width=.8\textwidth]{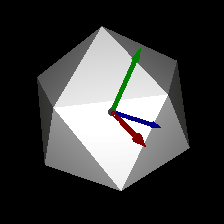}
     % \caption{Cone-Symsol-I\label{fig:tetS2}}
     \end{subfigure}%
     \hfill
     \begin{subfigure}{.3\linewidth}
     \centering
     \includegraphics[width=.8\textwidth]{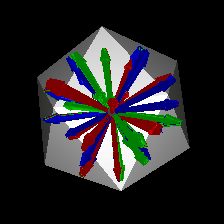}
     % \caption{Cone-Symsol-I\label{fig:tet2S2}}
     \end{subfigure}%
     \hfill
     %  \\
     % \begin{subfigure}{.3\linewidth}
     % \centering
     % \includegraphics[width=1\textwidth]{images/distImages/tet_00126.png}
     % % \caption{Tet-Symsol-I\label{fig:tetS1}}
     % \end{subfigure}%
     % \hfill
     % \begin{subfigure}{.3\linewidth}
     % \centering
     % \includegraphics[width=1.\textwidth]{images/distImages/tet_00736.png}
     % % \caption{Tet-Symsol-I\label{fig:tetS2}}
     % \end{subfigure}%
     % \hfill
     % \begin{subfigure}{.3\linewidth}
     % \centering
     % \includegraphics[width=1\textwidth]{images/distImages/tet_01048.png}
     % % \caption{Tet-Symsol-I\label{fig:tet2S2}}
     % \end{subfigure}%
     % \hfill
     %
         % \includegraphics[width=0.19\textwidth]{sources/images/alignedbadly2.PNG}
      
     \caption{Pose distribution visualization for different objects in Symsol-I. Each row corresponds to a single object starting from a Cylinder, Cone, Cube, Tetrahedron, and Icosa. The first column shows the input image. The second column shows the coordinate reference frame transformed with ground truth rotation and rendered on the input image. In the third column, we transform the coordinate reference frame with top 500 rotations in the distribution and render on the image. Our approach captures the distribution sharply and captures all the modes. This is clearly indicated in cylinder and cone where the red axis is very sharp without much deviation which indicates that the symmetric axis is correctly detected and the variation around the symmetric axis is properly captured in top 500 rotations instead of focusing on some specific modes. The sharpness is also clearly visible in cube, tetrahedron, and icosa where the axes have less blur around them.   
     \label{fig:ourViz}}
\end{figure}

\clearpage

% \section{Rationale}
% \label{sec:rationale}
% % 
% Having the supplementary compiled together with the main paper means that:
% % 
% \begin{itemize}
% \item The supplementary can back-reference sections of the main paper, for example, we can refer to \cref{sec:intro};
% \item The main paper can forward reference sub-sections within the supplementary explicitly (e.g. referring to a particular experiment); 
% \item When submitted to arXiv, the supplementary will already included at the end of the paper.
% \end{itemize}
% % 
% To split the supplementary pages from the main paper, you can use \href{https://support.apple.com/en-ca/guide/preview/prvw11793/mac#:~:text=Delete%20a%20page%20from%20a,or%20choose%20Edit%20%3E%20Delete).}{Preview (on macOS)}, \href{https://www.adobe.com/acrobat/how-to/delete-pages-from-pdf.html#:~:text=Choose%20%E2%80%9CTools%E2%80%9D%20%3E%20%E2%80%9COrganize,or%20pages%20from%20the%20file.}{Adobe Acrobat} (on all OSs), as well as \href{https://superuser.com/questions/517986/is-it-possible-to-delete-some-pages-of-a-pdf-document}{command line tools}.
\end{document}